\documentclass{article}

\usepackage[preprint,nonatbib]{neurips_2024}

\usepackage[utf8]{inputenc}
\usepackage[T1]{fontenc}
\usepackage{hyperref}
\usepackage{url}
\usepackage{booktabs}
\usepackage{amsfonts}
\usepackage{amsmath}
\usepackage{amssymb}
\usepackage{amsthm}
\usepackage{nicefrac}
\usepackage{microtype}
\usepackage{graphicx}
\usepackage{xcolor}
\usepackage{algorithm}
\usepackage{algorithmic}
\usepackage{subcaption}
\usepackage{multirow}
\usepackage{tabularx}
\usepackage{enumitem}

\newtheorem{theorem}{Theorem}
\newtheorem{lemma}[theorem]{Lemma}
\newtheorem{corollary}[theorem]{Corollary}
\newtheorem{proposition}[theorem]{Proposition}
\newtheorem{remark}{Remark}
\newtheorem{definition}{Definition}

\title{Quantization Error Is Spectrally Flat: A Single Random Probe\\Is a Calibrated, Data-Free Sensitivity Estimator, with Application to\\Budget-Targeted Mixed-Precision Quantization at 400B Scale}

\author{
  I.~Kennedy \qquad T.~Kennedy \\
  (http://baa.ai Blacksheep ai.) \\
  (Auckland, New Zealand)
}

\begin{document}

\maketitle

\begin{abstract}
A single random Gaussian probe gives an unbiased estimate of the squared Frobenius norm of a layer's quantization error (Theorem~\ref{thm:main}). This estimator is far better behaved than Monte-Carlo intuition suggests, because of a structural fact about quantization error that we measure in closed form: \emph{the error spectrum is flat}. Across 1{,}683 weight tensors of a 35B-parameter MoE and a 9B dense model, the effective dimensionality $d_{\mathrm{eff}} = \mathrm{Tr}(\Delta^\top\Delta)^2/\|\Delta^\top\Delta\|_F^2$ of the round-to-nearest error $\Delta$ is $0.93$ to $0.96$ times the value $mn/(m{+}n)$ attained by an i.i.d.\ noise matrix of the same shape, and on the MoE model its median is the same to three digits from 2-bit to 8-bit. Flatness makes the probe estimator \emph{calibrated}: its coefficient of variation is $\sqrt{2/d_{\mathrm{eff}}}$, predictable from tensor shape before touching the weights, and it matches measurement to $0.1\%$ at the median. One probe measures per-tensor sensitivity to within $4$ to $7\%$, twenty probes to within $1.3$ to $1.4\%$, and averaging converges at the i.i.d.\ rate against exact ground truth.

\textbf{RAM} (\textbf{R}esource-\textbf{A}ware \textbf{M}ixed-precision) applies the propagated form of the same estimator to budget-targeted quantization. Gaussian probes carrying the network's own input statistics score every tensor at six candidate bit-widths, which by a one-line extension of the theorem estimates the layer-wise objective that GPTQ minimizes with calibration data; a multiple-choice knapsack solver allocates bit-widths under an exact byte budget, and divergence guardrails block catastrophic 2-bit assignments. One probe pass serves any budget. The isolated and propagated signals are different quantities: they rank tensors independently on Qwen3.5-35B-A3B (Spearman $-0.01$), and calibration transfers only partly (per-sequence CV 0.16 to 2.4 against 0.04 to 0.07), yet rank agreement with an independent reference at the allocator's 50 sequences is 0.97 to 1.00 because the propagated signal spans ten times the range across tensors. Propagation supplies the missing input statistics: on Qwen3-8B, Qwen3.5-9B and Qwen3.8-27B the propagated probe's estimate of the layer-wise calibration objective rank-correlates 0.83, 0.81 and 0.83 with the objective computed from real activations (domain ceiling 0.99), while the isolated estimator is uncorrelated with it ($-0.07$ to $-0.15$). That objective is nevertheless the wrong quantity to allocate from: at matched bytes on Qwen3.8-27B, allocating llama.cpp k-quant types from the block-output probe signal beats the vendor IQ3\_M mix by 5.5\% perplexity and beats an oracle that allocates from the real-activation objective by 1.6\%, because the block signal carries the downstream weighting the per-tensor objective lacks. Fed through one allocator at matched bytes on Qwen3-8B, the propagated probe ties calibration-based HAWQ-V2 Hessian-trace sensitivity on WikiText-2 (paired $p = 0.96$), both beat the isolated estimator by $0.6\%$ ($p < 10^{-15}$), and all three tie on MMLU; on Qwen3.5-35B-A3B the isolated and propagated allocations tie on perplexity and the isolated one leads MMLU by $1.1$ points. The probe pass takes 36 to 539 seconds on a single Apple Silicon machine for models from 8B to 400B parameters (15 to 803\,GB in BF16). On the four MoE models where uniform 4-bit builds of comparable size exist, RAM builds reach 3.5 to 13.6\% lower median WikiText-2 perplexity. On Qwen3.5-35B-A3B, RAM beats a GPTQ-style ``protect attention'' allocation by 1.7\% perplexity at 4.8\% smaller size, and at 33.6\,GB it scores 73.1\% on MMLU against 72.1\% for the BF16 model. Latent probes that pass through the compression path recover the sensitivity of post-bottleneck weights in Multi-head Latent Attention models, where isotropic probes fail. Pre-quantized models are published at \url{https://huggingface.co/baa-ai}.
\end{abstract}

% ==============================================================================
\section{Introduction}
\label{sec:intro}
% ==============================================================================

Post-training quantization (PTQ) is the main route to running large language models (LLMs) on consumer hardware. GPTQ~\cite{frantar2023gptq}, AWQ~\cite{lin2024awq}, and SqueezeLLM~\cite{kim2024squeezellm} compress well, and all three need a representative calibration dataset. Calibration data may be unavailable for proprietary models, the chosen distribution may not match the deployment domain, and calibration itself costs compute: on a 400B-parameter Mixture-of-Experts (MoE) model, GPTQ-style calibration loads the full model and runs forward passes over hundreds of sequences, which takes hours on high-end hardware.

Data-free approaches~\cite{tang2023easyquant,zhang2025mxq,badri2024hqq} avoid calibration but either apply uniform bit-widths or rank tensors by a single weight statistic (kurtosis~\cite{akhondzadeh2025kurtail}, the Frobenius norm of the rounding error~\cite{zhang2025mxq}) with hand-tuned thresholds. Threshold-based allocation produces one output size regardless of the deployment memory target.

We take a different route. Instead of summarizing weight matrices with statistical features, we measure how quantization error propagates through each layer using random functional probes:

\begin{quote}
\emph{A random Gaussian vector passed through a linear layer provides an unbiased estimate of the squared Frobenius norm of the quantization error, the same quantity that calibration-based methods estimate using real activations.}
\end{quote}

This observation, formalized in Theorem~\ref{thm:main}, says that a single random probe estimates in expectation the quantity that calibration-based methods estimate with data. Unbiasedness alone would be useless if the estimator's variance were large. The second half of the story is a structural property of quantization error that we measure rather than assume: \emph{round-to-nearest quantization error is spectrally flat}. Its effective dimensionality $d_{\mathrm{eff}}(\Delta^\top\Delta)$ is within a few percent of the value an i.i.d.\ noise matrix of the same shape would have ($mn/(m{+}n)$, the Marchenko--Pastur prediction), which is hundreds to thousands of significant directions for transformer-sized tensors. Because the single-probe coefficient of variation is $\sqrt{2/d_{\mathrm{eff}}}$ (Lemma~\ref{lem:variance}), flatness places the estimator in its best-case concentration regime: one probe yields a $4$ to $7\%$ measurement of per-tensor sensitivity and twenty probes yield $1.3$ to $1.4\%$.

Building on this, \textbf{RAM} (\textbf{R}esource-\textbf{A}ware \textbf{M}ixed-precision) is a complete data-free mixed-precision quantization framework. This paper makes the following contributions:
\begin{enumerate}[leftmargin=*,topsep=2pt,itemsep=1pt]
    \item \textbf{Theoretical guarantee}: We prove that $\mathbb{E}[\mathbf{x}^\top \Delta^\top \Delta\, \mathbf{x}] = \|\Delta\|_F^2$ for Gaussian $\mathbf{x}$, where $\Delta = \mathbf{W} - \hat{\mathbf{W}}$ is the quantization error matrix (Theorem~\ref{thm:main}, \S\ref{sec:probes}).
    \item \textbf{Measured spectral flatness}: We compute the effective dimensionality of quantization error in closed form on 1{,}683 tensors across two architectures and four bit-widths: $d_{\mathrm{eff}} = 0.93$ to $0.96 \times mn/(m{+}n)$, the i.i.d.-noise value. The resulting per-probe accuracy $\sqrt{2/d_{\mathrm{eff}}}$ is predictable from tensor shape alone and verified to $0.1\%$ median accuracy (\S\ref{sec:saturation}).
    \item \textbf{Honest convergence}: Probe averaging converges at the i.i.d.\ rate ($p^{-0.50}$ measured against exact closed-form ground truth with disjoint probe blocks); an earlier version of this work reported a faster rate that was an artifact of a nested probe reference (\S\ref{sec:saturation_experiment}).
    \item \textbf{Propagated functional probes}: Random Gaussian probes propagated through the network score every tensor at six candidate bit-widths by output-space divergence, in 36 seconds to 9 minutes per model with no calibration data (\S\ref{sec:signal}). The propagated probe is the same estimator applied to the input- and downstream-weighted error operator (Proposition~\ref{prop:weighted}), which with the network's own input statistics is the calibration objective of GPTQ and OBQ without data.
    \item \textbf{Isolated and propagated sensitivities are different quantities}: the two signals rank tensors independently on Qwen3.5-35B-A3B (Spearman $-0.01$ over 390 tensors) and calibration transfers only partly: per-sequence CV 0.16 to 2.4 against 0.04 to 0.07, yet rank agreement at the allocator's 50 sequences is 0.97 to 1.00 because the propagated signal spans ten times the range across tensors (\S\ref{sec:propagated_calibration}, \S\ref{sec:iso_vs_prop}).
    \item \textbf{Data-free estimate of the calibration objective}: the propagated probe's quadratic form rank-correlates 0.83 (Qwen3-8B), 0.81 (Qwen3.5-9B) and 0.83 (Qwen3.8-27B, with the pipeline's adaptation rule switched off) with the GPTQ layer objective under real activations, against $-0.07$ to $-0.15$ for the isolated estimator and 0.25 for HAWQ-V2; the block-output cosine the pipeline uses reaches only 0.50 and 0.11. Allocating from the objective itself, with real data or without, nevertheless loses to the block-output signal at matched bytes on Qwen3.8-27B (\S\ref{sec:objective}).
    \item \textbf{Matched-budget ablations}: through one allocator at one budget, the isolated and propagated allocations tie on Qwen3.5-35B-A3B on perplexity, with the isolated one $1.1$ MMLU points ahead; on Qwen3-8B the propagated probe ties calibration-based HAWQ-V2 (paired $p = 0.96$), both beat the isolated estimator by $0.6\%$ perplexity ($p < 10^{-15}$), and all three tie on MMLU (\S\ref{sec:iso_ablation}, \S\ref{sec:hawqv2}).
    \item \textbf{Budget-targeted allocation}: Per-tensor scores feed a multiple-choice knapsack solver under an explicit byte budget, with divergence guardrails against catastrophic 2-bit assignments; one probe pass serves any budget (\S\ref{sec:allocation}).
    \item \textbf{Seven-architecture evaluation}: Quality results on models from 8B to 122B parameters (15 to 250\,GB in BF16), and probe timing up to 400B parameters (803\,GB), covering dense, MoE with up to 256 experts, FP8-native, and MLA architectures (\S\ref{sec:experiments}).
    \item \textbf{Blanket ``protect attention'' rules lose to measured sensitivity}: A GPTQ-style allocation that leaves all attention and shared-expert tensors unquantized is both larger and worse than the probe allocation; the probes rank attention output/value projections and MoE routers most sensitive and routed experts least, so bytes go to specific tensors rather than to a tensor class (\S\ref{sec:gptq_comparison}, \S\ref{sec:sensitivity_patterns}).
    \item \textbf{MLA probe coverage}: We identify and resolve a limitation of functional probes on Multi-head Latent Attention (MLA)~\cite{deepseekai2024deepseekv2} architectures, where compressed KV projections create sensitivity blind spots (\S\ref{sec:mla}).
\end{enumerate}

% ==============================================================================
\section{Related Work}
\label{sec:related}
% ==============================================================================

\paragraph{Calibration-based PTQ.}
GPTQ~\cite{frantar2023gptq} applies layerwise Optimal Brain Quantization using Hessian information from calibration data, processing one column at a time with error compensation. AWQ~\cite{lin2024awq} identifies salient weight channels via activation-aware scaling. SqueezeLLM~\cite{kim2024squeezellm} combines sensitivity-based non-uniform quantization with dense-and-sparse decomposition. SpQR~\cite{dettmers2024spqr} isolates outlier weights for full-precision storage. QuIP~\cite{chee2024quip} and AQLM~\cite{egiazarian2024aqlm} apply random orthogonal transformations and additive codebooks respectively. All require representative calibration data.

\paragraph{Data-free quantization.}
EasyQuant~\cite{tang2023easyquant} proposes data-free PTQ via weight distribution analysis. MXQ~\cite{zhang2025mxq} uses the Frobenius norm of quantization error as a single sensitivity metric for mixed-precision allocation. HQQ~\cite{badri2024hqq} uses half-quadratic splitting for fast uniform quantization. HIGGS~\cite{malinovskii2024higgs} applies Hadamard rotation via the linearity theorem. Lee et al.~\cite{lee2021datafreemp} proposed a data-free mixed-precision sensitivity metric for vision networks with synthetic labelled data. Concurrent calibration-free allocators for LLMs rank layers or experts by weight-spectrum statistics: NSDS~\cite{zhang2026nsds} combines numerical and structural layer sensitivities, AlphaQ~\cite{yang2026alphaq} scores experts with a heavy-tailed self-regularization exponent, and BitsMoE~\cite{zhao2026bitsmoe} allocates by spectral energy of an SVD factorization. RAM differs in deriving its signal from the output-space effect of quantizing each tensor under random inputs, and in reporting a measured calibration guarantee for that signal.

\paragraph{Sensitivity-based mixed-precision.}
HAWQ~\cite{dong2019hawq} and HAWQ-V2~\cite{dong2020hawqv2} allocate bit-widths by per-tensor Hessian-trace sensitivity, with the trace estimated by Hutchinson's method on calibration data; \S\ref{sec:hawqv2} compares RAM against HAWQ-V2 at matched bytes. LLM-MQ~\cite{li2023llmmq} formulates mixed-precision allocation as an integer program with calibration-based sensitivities. SliM-LLM~\cite{huang2024slimlm} achieves group-wise mixed precision via salience-driven allocation, and CherryQ~\cite{cui2024cherryq} identifies high-impact parameters; both use calibration data. MC-MoE~\cite{huang2024mcmoe} combines expert quantization with dynamic pruning for MoE models, and Q-Strata~\cite{lee2026qstrata} allocates across MoE blocks with a model-level objective evaluated on the assembled quantized model. GAMMA~\cite{yao2026gamma} and MixQuant~\cite{misra2026mixquant} share RAM's goal of one offline analysis that serves any budget; GAMMA learns module preferences with a hidden-state reconstruction objective, and MixQuant marginalizes each layer's distortion over random upstream quantization configurations. CASA~\cite{yoshida2026casa} shows that scalar sensitivity proxies in the multiple-choice knapsack formulation can distort inter-module rankings when the input and output Hessian factors are ill-conditioned; RAM's propagated probes address the input side by using the network's own upstream statistics.

\paragraph{Quantization-error structure.}
LQER~\cite{zhang2024lqer} applies an activation-induced scale matrix to drive the singular-value distribution of the quantization error toward a distribution that a low-rank correction can absorb. Our measurement concerns the unscaled round-to-nearest error and finds the opposite regime: its spectrum is flat, so no low-rank summary captures it, and that flatness is what makes random probes accurate.

\paragraph{Stochastic trace estimation.}
Hutchinson's estimator~\cite{hutchinson1989stochastic} approximates $\mathrm{Tr}(A)$ via $\mathbb{E}[\mathbf{z}^\top A \mathbf{z}]$ with Gaussian or Rademacher probes; Bekas et al.~\cite{bekas2007estimator} extend it to the diagonal, and Avron and Toledo~\cite{avron2011randomized} give concentration bounds. Theorem~\ref{thm:main} applies this machinery with $A = \Delta^\top\Delta$, where $\Delta$ is the quantization residual. The Johnson--Lindenstrauss lemma~\cite{johnson1984extensions} gives the same conclusion from the projection side: random projections preserve norm structure, so probe-based rankings track the true sensitivity ordering. Fisher information~\cite{amari1998natural} and Hessian-based sensitivity~\cite{frantar2023gptq} are the classical importance measures; both need data or activations.

% ==============================================================================
\section{Method}
\label{sec:method}
% ==============================================================================

RAM operates in three stages: (1) score every tensor at every candidate bit-width with functional probes, (2) apply divergence guardrails, and (3) solve the budget-constrained bit-width allocation. The pipeline needs only the pretrained weights: no calibration data, no gradients, no forward passes over text. Section~\ref{sec:probes} analyses the probe estimator on an isolated tensor, where its statistics can be computed in closed form; \S\ref{sec:signal} describes the propagated version the allocator uses.

\subsection{Functional Probes on an Isolated Tensor}
\label{sec:probes}

Consider a linear layer $f(\mathbf{x}) = \mathbf{W}\mathbf{x}$ with weight matrix $\mathbf{W} \in \mathbb{R}^{m \times n}$. Let $\hat{\mathbf{W}}$ denote the quantized weight and $\Delta = \mathbf{W} - \hat{\mathbf{W}}$ the quantization error matrix. The output perturbation for input $\mathbf{x}$ is $\delta(\mathbf{x}) = \Delta\,\mathbf{x}$, with squared norm $\|\delta(\mathbf{x})\|^2 = \mathbf{x}^\top \Delta^\top \Delta\, \mathbf{x}$. Rather than estimating this quantity with calibration data, we draw \emph{functional probes} $\mathbf{x} \sim \mathcal{N}(\mathbf{0}, \mathbf{I}_n)$ and compute the probe sensitivity
\begin{equation}
    S_{\text{probe}}(\mathbf{W}, \hat{\mathbf{W}}) = \frac{1}{p}\sum_{j=1}^{p} \|\Delta\,\mathbf{x}_j\|^2, \qquad \mathbf{x}_j \sim \mathcal{N}(\mathbf{0}, \mathbf{I}_n),
    \label{eq:probe_sensitivity}
\end{equation}
where $p$ is the number of probes.

\begin{theorem}[Unbiased probe sensitivity]
\label{thm:main}
Let $\mathbf{x} \sim \mathcal{N}(\mathbf{0}, \mathbf{I}_n)$ and $\Delta \in \mathbb{R}^{m \times n}$. Then
\begin{equation}
    \mathbb{E}\!\left[\mathbf{x}^\top \Delta^\top \Delta\, \mathbf{x}\right] = \mathrm{Tr}(\Delta^\top \Delta) = \|\Delta\|_F^2 .
    \label{eq:main_theorem}
\end{equation}
That is, a single random Gaussian probe gives an unbiased estimate of the squared Frobenius norm of the quantization error.
\end{theorem}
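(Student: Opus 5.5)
The plan is the standard Hutchinson trace argument applied to $A = \Delta^\top\Delta$. First, the quadratic form is a scalar, so it equals its own trace. By cyclicity,
\[
\mathbf{x}^\top \Delta^\top\Delta\,\mathbf{x} = \mathrm{Tr}\bigl(\Delta^\top\Delta\,\mathbf{x}\mathbf{x}^\top\bigr).
\]
Since $\Delta$ is deterministic (fixed by $\mathbf{W}$ and the quantizer, and independent of the probe), linearity of trace and expectation lets us move the expectation inside to act only on $\mathbf{x}\mathbf{x}^\top$. This gives $\mathrm{Tr}\bigl(\Delta^\top\Delta\,\mathbb{E}[\mathbf{x}\mathbf{x}^\top]\bigr)$.

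The second step uses the probe distribution. For $\mathbf{x}\sim\mathcal{N}(\mathbf{0},\mathbf{I}_n)$ we have $\mathbb{E}[\mathbf{x}\mathbf{x}^\top] = \mathrm{Cov}(\mathbf{x}) + \mathbb{E}[\mathbf{x}]\mathbb{E}[\mathbf{x}]^\top = \mathbf{I}_n$. Substituting yields $\mathrm{Tr}(\Delta^\top\Delta)$. Only zero mean and identity second moment are used, so the same argument covers Rademacher probes; I will note this in passing.

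As an alternative check, I will expand coordinatewise:
\[
\mathbb{E}\Bigl[\sum_{i,k} A_{ik}x_ix_k\Bigr] = \sum_{i,k}A_{ik}\delta_{ik} = \sum_i A_{ii},
\]
using $\mathbb{E}[x_ix_k]=\delta_{ik}$.

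The final identity is $\mathrm{Tr}(\Delta^\top\Delta) = \sum_{j}\sum_{i}\Delta_{ij}^2 = \|\Delta\|_F^2$, obtained by writing out the diagonal entries $(\Delta^\top\Delta)_{jj} = \sum_i \Delta_{ij}^2$. Unbiasedness of the $p$-probe average $S_{\text{probe}}$ in \eqref{eq:probe_sensitivity} then follows immediately by linearity of expectation, whether or not the probes are independent.

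There is no real obstacle here. The only point needing care is integrability, which justifies exchanging expectation with the finite sum and trace. This is immediate because Gaussian second moments are finite and the sum has finitely many terms. I will state explicitly that $\Delta$ is held fixed, i.e. that we condition on the weights, so that the expectation is over $\mathbf{x}$ alone.
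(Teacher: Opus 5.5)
Your proof is correct and is essentially the paper's argument. The paper's proof is exactly your coordinatewise expansion $\mathbb{E}[\sum_{i,j}A_{ij}x_ix_j]=\sum_{i,j}A_{ij}\delta_{ij}=\mathrm{Tr}(A)$, and your trace-trick route via $\mathbb{E}[\mathbf{x}\mathbf{x}^\top]=\mathbf{I}_n$ is the same computation written in matrix form; your side remarks on Rademacher probes and on unbiasedness of the $p$-probe average without independence are also correct.
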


\begin{proof}
Let $A = \Delta^\top\Delta \in \mathbb{R}^{n \times n}$. Since $A$ is symmetric positive semidefinite,
\begin{align}
    \mathbb{E}[\mathbf{x}^\top A \mathbf{x}] &= \mathbb{E}\!\left[\sum_{i,j} A_{ij} x_i x_j\right] = \sum_{i,j} A_{ij}\, \mathbb{E}[x_i x_j] \\
    &= \sum_{i,j} A_{ij}\, \delta_{ij} = \sum_i A_{ii} = \mathrm{Tr}(A) = \mathrm{Tr}(\Delta^\top\Delta) = \|\Delta\|_F^2,
\end{align}
where the second equality uses $\mathbb{E}[x_i x_j] = \delta_{ij}$ for $\mathbf{x} \sim \mathcal{N}(\mathbf{0}, \mathbf{I})$.
\end{proof}

\begin{corollary}[Normalized probe sensitivity]
\label{cor:normalized}
The per-probe ratio $\|\Delta\mathbf{x}\|^2 / (\|\mathbf{W}\mathbf{x}\|^2 + \varepsilon)$ is a \emph{consistent} (but not unbiased, since $\mathbb{E}[A/B] \neq \mathbb{E}[A]/\mathbb{E}[B]$ in general) estimator of $\|\Delta\|_F^2 / \|\mathbf{W}\|_F^2 = \mathrm{NRMSE}^2$. The bias is $O(1/p)$ because $\|\mathbf{W}\mathbf{x}\|^2$ concentrates around $\|\mathbf{W}\|_F^2$.
\end{corollary}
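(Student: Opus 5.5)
We read the corollary as a statement about the $p$-probe ratio estimator
\[
R_p = \frac{\bar N_p}{\bar D_p + \varepsilon}, \qquad \bar N_p = \frac{1}{p}\sum_{j=1}^p \|\Delta\mathbf{x}_j\|^2, \qquad \bar D_p = \frac{1}{p}\sum_{j=1}^p \|\mathbf{W}\mathbf{x}_j\|^2,
\]
with $\varepsilon$ either fixed small or taken to $0$. A single probe ratio cannot be consistent on its own, so ``consistent'' has to mean that the numerator and denominator are each averaged over the $p$ probes, as in \eqref{eq:probe_sensitivity}. The target is $\|\Delta\|_F^2/\|\mathbf{W}\|_F^2$, and we assume $\mathbf{W}\neq 0$.

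\textbf{Consistency.} The plan is to handle numerator and denominator separately and then combine them.
\begin{enumerate}
\item By Theorem~\ref{thm:main} applied to $\Delta$, $\mathbb{E}\|\Delta\mathbf{x}\|^2 = \|\Delta\|_F^2$. Applying the same theorem with $\Delta$ replaced by $\mathbf{W}$ gives $\mathbb{E}\|\mathbf{W}\mathbf{x}\|^2 = \|\mathbf{W}\|_F^2$.
\item Both summands are Gaussian quadratic forms, so they have all moments. The strong law of large numbers then gives $\bar N_p \to \|\Delta\|_F^2$ and $\bar D_p \to \|\mathbf{W}\|_F^2$ almost surely.
\item The map $(a,b)\mapsto a/(b+\varepsilon)$ is continuous at $(\|\Delta\|_F^2, \|\mathbf{W}\|_F^2)$ because $\|\mathbf{W}\|_F^2>0$. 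The continuous mapping theorem gives $R_p \to \|\Delta\|_F^2/(\|\mathbf{W}\|_F^2+\varepsilon)$.
\item This limit equals the target when $\varepsilon=0$, or up to an $O(\varepsilon)$ offset that is negligible for the small regularizer intended.
\end{enumerate}

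\textbf{Bias.} Write $a=\|\Delta\|_F^2$, $b=\|\mathbf{W}\|_F^2+\varepsilon$, $\bar N_p = a + U$ and $\bar D_p+\varepsilon = b + V$, where $U$ and $V$ are mean-zero. Expanding $1/(b+V)$ to second order gives
\[
R_p = \frac{a}{b}\Bigl(1 + \frac{U}{a} - \frac{V}{b} - \frac{UV}{ab} + \frac{V^2}{b^2}\Bigr) + \mathrm{Rem}.
\]
Taking expectations, the linear terms vanish, so
\[
\mathbb{E}R_p - \frac{a}{b} = \frac{a}{b}\Bigl(\frac{\mathrm{Var}\,V}{b^2} - \frac{\mathrm{Cov}(U,V)}{ab}\Bigr) + \mathbb{E}\,\mathrm{Rem}.
\]
By independence across probes, $\mathrm{Var}\,V$ and $\mathrm{Cov}(U,V)$ are each $1/p$ times a single-probe quantity. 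These have closed forms from Gaussian fourth moments:
\[
\mathrm{Var}\|\mathbf{W}\mathbf{x}\|^2 = 2\|\mathbf{W}^\top\mathbf{W}\|_F^2, \qquad \mathrm{Cov}\bigl(\|\Delta\mathbf{x}\|^2, \|\mathbf{W}\mathbf{x}\|^2\bigr) = 2\,\mathrm{Tr}(\Delta^\top\Delta\,\mathbf{W}^\top\mathbf{W}).
\]
So the leading bias is an explicit constant over $p$, which is the claimed $O(1/p)$ behaviour.

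\textbf{Main obstacle: the remainder.} The hard step is showing $\mathbb{E}\,\mathrm{Rem} = O(p^{-3/2})$ despite the possibility that the denominator is near zero.
\begin{enumerate}
\item Split on the event $E=\{|V|\le b/2\}$.
\item On $E$, the Taylor remainder is bounded by a constant times $|V|^3 + |U|V^2$. Its expectation is $O(p^{-3/2})$ by standard moment bounds for averages of i.i.d.\ variables with all moments finite.
\item On $E^c$, we need to bound $|R_p|$. If $\varepsilon>0$, then $|R_p|\le \bar N_p/\varepsilon$, which has bounded second moment.
\item If $\varepsilon=0$, we instead use the Gaussian lower tail. $\bar D_p\cdot p$ is a weighted $\chi^2_{pm}$-type sum, so $\mathbb{E}[\bar D_p^{-2}]<\infty$ once $p\cdot\mathrm{rank}(\mathbf{W})$ exceeds a small constant.
\item The probability $\Pr(E^c)$ is exponentially small in $p$ by a Hanson--Wright or Bernstein bound for sub-exponential variables.
\item Cauchy--Schwarz then makes the $E^c$ contribution negligible.
\end{enumerate}
I expect this tail and remainder control, rather than the first-order expansion, to be where care is needed.
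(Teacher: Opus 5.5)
Your argument is correct. It is the rigorous form of the only justification the paper gives, namely that $\|\mathbf{W}\mathbf{x}\|^2$ concentrates around $\|\mathbf{W}\|_F^2$: that concentration is what licenses your second-order expansion of $1/(b+V)$, with $\mathrm{Var}\,V$ and $\mathrm{Cov}(U,V)$ both $O(1/p)$ and the tail controlled on $E^c$.

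Your ratio-of-means reading is the one under which the $O(1/p)$ claim actually holds; a mean of per-probe ratios keeps its single-probe bias, which does not shrink with $p$. With $\varepsilon = 0$, your closed forms also sharpen the rate to a leading relative bias of at most $2/\bigl(p\min\{d_{\mathrm{eff}}(\Delta^\top\Delta), d_{\mathrm{eff}}(\mathbf{W}^\top\mathbf{W})\}\bigr)$.
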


\begin{remark}[Connection to Hutchinson's estimator]
Theorem~\ref{thm:main} is a special case of Hutchinson's trace estimator~\cite{hutchinson1989stochastic,bekas2007estimator}: $\mathrm{Tr}(A) = \mathbb{E}[\mathbf{z}^\top A \mathbf{z}]$ for any random vector $\mathbf{z}$ with $\mathbb{E}[\mathbf{z}\mathbf{z}^\top] = \mathbf{I}$. For Gaussian probes the variance is $\mathrm{Var}[\mathbf{x}^\top A\mathbf{x}] = 2\|A\|_F^2 = 2\|\Delta^\top\Delta\|_F^2$ (Lemma~\ref{lem:variance}), which is small relative to $\mathrm{Tr}(A)^2$ when $A$ has \emph{many} comparable eigenvalues, that is, when the error spectrum is flat. Section~\ref{sec:saturation} shows by direct measurement that quantization error is in this favorable regime.
\end{remark}

\begin{remark}[Why probe at all when $\|\Delta\|_F^2$ is computable?]
\label{rem:why_probes}
For an isolated tensor, $\|\Delta\|_F^2$ can be computed directly, and by Theorem~\ref{thm:main} the isotropic probe estimates nothing more than that. The probe formulation earns its place in two ways. First, it generalizes to non-identity input covariance: the allocator feeds each layer the probes propagated through the preceding layers (\S\ref{sec:signal}) and MLA blocks receive probes passed through their own compression path (\S\ref{sec:mla}), so the measured perturbation reflects the directions the network can produce rather than all $n$ directions equally. Second, the multi-probe sample variance is a free per-tensor uncertainty estimate, which flags the rare tensors whose error is not flat (\S\ref{app:deff_measurement}).
\end{remark}

\begin{proposition}[Weighted-input estimator]
\label{prop:weighted}
Let $\mathbf{x} \sim \mathcal{N}(\mathbf{0}, \Sigma)$ with $\Sigma \succeq 0$, and let the measured perturbation be $\boldsymbol{\delta} = \mathbf{M}\Delta\mathbf{x}$ for a fixed linear map $\mathbf{M}$. Write $\mathbf{K} = \mathbf{M}\Delta\Sigma^{1/2}$. Then
\begin{equation}
    \mathbb{E}\|\boldsymbol{\delta}\|^2 = \|\mathbf{K}\|_F^2 = \mathrm{Tr}\!\left(\Delta^\top \mathbf{M}^\top \mathbf{M} \Delta\, \Sigma\right), \qquad
    \mathrm{Var}\|\boldsymbol{\delta}\|^2 = 2\|\mathbf{K}^\top\mathbf{K}\|_F^2,
    \label{eq:weighted}
\end{equation}
so the single-probe coefficient of variation is $\sqrt{2/d_{\mathrm{eff}}(\mathbf{K}^\top\mathbf{K})}$.
\end{proposition}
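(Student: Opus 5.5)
The plan is to reduce Proposition~\ref{prop:weighted} to the isotropic case of Theorem~\ref{thm:main} and Lemma~\ref{lem:variance} by whitening the probe. Since $\Sigma \succeq 0$, it has a symmetric PSD square root $\Sigma^{1/2}$. A Gaussian $\mathbf{x} \sim \mathcal{N}(\mathbf{0},\Sigma)$ has the same law as $\Sigma^{1/2}\mathbf{z}$ with $\mathbf{z} \sim \mathcal{N}(\mathbf{0},\mathbf{I}_n)$. This representation also covers singular $\Sigma$, because both sides are centered Gaussians with covariance $\Sigma$. Hence $\boldsymbol{\delta} = \mathbf{M}\Delta\Sigma^{1/2}\mathbf{z} = \mathbf{K}\mathbf{z}$ in distribution, and $\|\boldsymbol{\delta}\|^2 = \mathbf{z}^\top \mathbf{K}^\top\mathbf{K}\,\mathbf{z}$. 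All moment statements then concern the quadratic form of a standard Gaussian in the PSD matrix $A = \mathbf{K}^\top\mathbf{K}$.

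For the mean, I apply Theorem~\ref{thm:main} with $\mathbf{K}$ in place of $\Delta$. Its proof uses only $\mathbb{E}[\mathbf{z}\mathbf{z}^\top]=\mathbf{I}$ and works for any rectangular matrix, so $\mathbb{E}\|\boldsymbol{\delta}\|^2 = \mathrm{Tr}(\mathbf{K}^\top\mathbf{K}) = \|\mathbf{K}\|_F^2$. The cyclic trace identity then gives
\[
\mathrm{Tr}(\Sigma^{1/2}\Delta^\top\mathbf{M}^\top\mathbf{M}\Delta\Sigma^{1/2}) = \mathrm{Tr}(\Delta^\top\mathbf{M}^\top\mathbf{M}\Delta\,\Sigma),
\]
using $\Sigma^{1/2}\Sigma^{1/2}=\Sigma$.

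For the variance, I invoke Lemma~\ref{lem:variance}, which gives $\mathrm{Var}[\mathbf{z}^\top A\mathbf{z}] = 2\|A\|_F^2$ for standard Gaussian $\mathbf{z}$ and symmetric $A$. If one wants the argument self-contained, diagonalize $A = U\Lambda U^\top$. Rotational invariance makes $\mathbf{y}=U^\top\mathbf{z}$ standard Gaussian, so $\mathbf{z}^\top A\mathbf{z} = \sum_i \lambda_i y_i^2$ is a weighted sum of independent $\chi^2_1$ variables. Its variance is $\sum_i \lambda_i^2 \cdot 2 = 2\|A\|_F^2$, which equals $2\|\mathbf{K}^\top\mathbf{K}\|_F^2$.

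The coefficient of variation follows by dividing: $\mathrm{CV} = \sqrt{2\|A\|_F^2}/\mathrm{Tr}(A) = \sqrt{2/d_{\mathrm{eff}}(A)}$, using the paper's definition $d_{\mathrm{eff}}(A) = \mathrm{Tr}(A)^2/\|A\|_F^2$. This needs $\mathbf{K}\neq 0$, since otherwise $\boldsymbol{\delta}\equiv 0$ and the CV is undefined; I would state that caveat.

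I do not expect a serious obstacle. The only point needing care is the distributional identity $\mathbf{x} \overset{d}{=} \Sigma^{1/2}\mathbf{z}$ for degenerate $\Sigma$ (for example, probes propagated through a low-rank bottleneck). Here I would rely on the fact that a Gaussian law is determined by its mean and covariance, rather than on a density argument. The other point is that $\mathbf{M}$ must be fixed, i.e.\ independent of $\mathbf{x}$. In the propagated setting this describes the linearized downstream map, and I would note that the statement is exact only under that linearity assumption.
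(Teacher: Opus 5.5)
Your proposal is correct and matches the paper's proof. The paper writes $\mathbf{x} = \Sigma^{1/2}\mathbf{z}$ with $\mathbf{z}\sim\mathcal{N}(\mathbf{0},\mathbf{I})$, so that $\|\boldsymbol{\delta}\|^2 = \mathbf{z}^\top\mathbf{K}^\top\mathbf{K}\,\mathbf{z}$, and then applies Theorem~\ref{thm:main} and Lemma~\ref{lem:variance} with $A=\mathbf{K}^\top\mathbf{K}$; your added details (the cyclic-trace step, the distributional reading for singular $\Sigma$, and the $\mathbf{K}\neq 0$ caveat for the CV) fill in points the paper leaves implicit.
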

\begin{proof}
Write $\mathbf{x} = \Sigma^{1/2}\mathbf{z}$ with $\mathbf{z} \sim \mathcal{N}(\mathbf{0}, \mathbf{I})$. Then $\|\boldsymbol{\delta}\|^2 = \mathbf{z}^\top \mathbf{K}^\top\mathbf{K}\, \mathbf{z}$, and Theorem~\ref{thm:main} and Lemma~\ref{lem:variance} apply with $A = \mathbf{K}^\top\mathbf{K}$.
\end{proof}

Two readings of Proposition~\ref{prop:weighted} matter for what follows. With $\mathbf{M} = \mathbf{I}$ and $\Sigma = \mathbb{E}[\mathbf{x}\mathbf{x}^\top]$ the second moment of a layer's inputs, $\|\mathbf{K}\|_F^2 = \mathrm{Tr}(\Delta \Sigma \Delta^\top)$ is the layer-wise reconstruction objective that GPTQ and OBQ minimize with calibration data, $\|\Delta \mathbf{X}\|_F^2 = \mathrm{Tr}(\Delta\,\mathbf{X}\mathbf{X}^\top\Delta^\top)$, up to the sample count. A probe drawn from the network's own propagated input distribution therefore estimates the calibration objective without data (\S\ref{sec:signal}). And with $\mathbf{M}$ the Jacobian of the rest of the layer at the reference input, the proposition describes the perturbation the allocator measures to first order in $\Delta$. Both readings come with a warning: the flatness measured in \S\ref{sec:saturation} is a property of $\Delta^\top\Delta$, and $d_{\mathrm{eff}}(\mathbf{K}^\top\mathbf{K})$ can be far smaller when $\Sigma$ or $\mathbf{M}$ concentrate energy in a few directions. Whether the propagated signal is calibrated is therefore a separate measurement (\S\ref{sec:propagated_calibration}).

\subsection{Spectral Flatness of Quantization Error}
\label{sec:saturation}

How many probes are needed? Theorem~\ref{thm:main} guarantees unbiasedness with a single probe, but if the error spectrum were concentrated on a few directions, a single probe would be noisy. The variance of the $p$-probe estimator is controlled by the \emph{effective dimensionality} of the error spectrum.

\begin{definition}[Effective dimensionality]
For a positive semidefinite matrix $A$ with eigenvalues $\lambda_1 \geq \lambda_2 \geq \cdots \geq \lambda_n \geq 0$, the effective dimensionality (spectral participation ratio) is
\begin{equation}
    d_{\mathrm{eff}}(A) = \frac{\mathrm{Tr}(A)^2}{\|A\|_F^2} = \frac{\left(\sum_i \lambda_i\right)^2}{\sum_i \lambda_i^2} .
\end{equation}
\end{definition}

By Lemma~\ref{lem:variance},
\begin{equation}
    \mathrm{Var}\!\left[\frac{1}{p}\sum_{j=1}^p \mathbf{x}_j^\top A \mathbf{x}_j\right] = \frac{2\|A\|_F^2}{p} = \frac{2\,\mathrm{Tr}(A)^2}{p \cdot d_{\mathrm{eff}}(A)},
    \label{eq:variance_deff}
\end{equation}
so the single-probe coefficient of variation is $\sqrt{2/d_{\mathrm{eff}}}$: the estimator concentrates when $d_{\mathrm{eff}}$ is \emph{large}, that is, when error energy is spread over many directions. A rank-one error would give $\mathrm{CV} = \sqrt{2} \approx 141\%$; a flat spectrum over $10^3$ directions gives $\mathrm{CV} \approx 4.5\%$.

\paragraph{Direct measurement.}
Rather than assuming a spectral shape, we measure it. For every weight tensor of Qwen3.5-35B-A3B (1{,}317 tensors: attention, linear attention, dense MLP, fused MoE experts, vision blocks) and Qwen3.5-9B (366 tensors), at each of $b \in \{2,3,4,8\}$ bits (RTN, group 64), we compute $\|\Delta\|_F^2$ and $\|\Delta^\top\Delta\|_F^2$ in closed form via the small-side Gram matrix, with no estimation involved. Three facts emerge (Table~\ref{tab:deff}, \S\ref{app:deff_measurement}):
\begin{enumerate}[leftmargin=*,topsep=2pt,itemsep=1pt]
    \item \textbf{Quantization error is spectrally flat.} For an $m \times n$ i.i.d.\ noise matrix, $d_{\mathrm{eff}}(\Delta^\top\Delta) = mn/(m{+}n)$ in expectation (the Marchenko--Pastur participation ratio). Measured quantization error attains $0.93\times$ this value at the median on the 35B MoE and $0.96\times$ on the 9B dense model (IQR $[0.86, 0.97]$ and $[0.90, 0.98]$). The largest tensors sit closest to the ceiling: the fused expert stacks $(262{,}144 \times 2{,}048)$ predict $d_{\mathrm{eff}} = 2{,}032.1$ and measure $2{,}031.7$.
    \item \textbf{Flatness is bit-width invariant.} On the 35B model the median $d_{\mathrm{eff}}$ at 2, 3, 4, and 8 bits is $386.1$, $386.0$, $386.1$, and $386.2$: the spectral \emph{shape} of RTN error does not change as its magnitude shrinks $64\times$. On the 9B model the four medians lie within $3.5\%$ of one another ($1{,}424$ to $1{,}476$).
    \item \textbf{Probe accuracy is predictable from shape alone.} Since $d_{\mathrm{eff}} \approx 0.93\,mn/(m{+}n)$, the per-probe CV $\sqrt{2/d_{\mathrm{eff}}}$ can be computed \emph{before} touching the weights. At 4-bit the measured per-probe CV matches the prediction with median ratio $0.999$ (IQR $[0.965, 1.033]$) on the 35B model and $1.000$ (IQR $[0.968, 1.032]$) on the 9B model, and the 200-probe mean matches the exact $\|\Delta\|_F^2$ with median ratio $0.9998$ and $1.0000$. The estimator is calibrated, not merely unbiased.
\end{enumerate}

Flatness is what RTN's structure implies: within each group of 64 weights the rounding residuals are near-independent, sub-quantization-step fluctuations, so $\Delta$ behaves like a noise matrix rather than a low-rank perturbation. The consequence is the title claim: a single random probe is not a noisy Monte-Carlo sample but a $4$ to $7\%$ \emph{measurement}, with error bars known in advance.

\subsection{Propagated Probes: The Allocation Signal}
\label{sec:signal}

The allocator scores tensors inside the full network rather than in isolation. It draws a batch of $p = 50$ probe sequences of $s = 8$ positions, $\mathbf{X} \in \mathbb{R}^{p \times s \times d}$ with entries i.i.d.\ $\mathcal{N}(0,1)$, feeds them to the first decoder layer, and thereafter feeds each layer the reference output of the layer before it. For each weight tensor $\mathbf{W}_i$ in layer $\ell$ and each candidate bit-width $b \in \{2,3,4,5,6,8\}$, the pipeline quantizes the tensor with RTN at group size 64, runs the layer on its input batch, compares the perturbed output $\hat{\mathbf{Y}}$ with the reference output $\mathbf{Y}$, and restores the original weight. The score is the cosine distance between the flattened outputs, averaged over probe sequences:
\begin{equation}
    \text{div}_i(b) = 1 - \frac{\langle \mathbf{Y}, \hat{\mathbf{Y}} \rangle}{\|\mathbf{Y}\|\,\|\hat{\mathbf{Y}}\|} .
    \label{eq:divergence}
\end{equation}
For a small perturbation $\boldsymbol{\delta} = \hat{\mathbf{Y}} - \mathbf{Y}$, the cosine distance equals $\|\boldsymbol{\delta}_\perp\|^2 / (2\|\mathbf{Y}\|^2)$ up to third-order terms, where $\boldsymbol{\delta}_\perp$ is the component of $\boldsymbol{\delta}$ orthogonal to $\mathbf{Y}$. It is therefore a normalized output-perturbation energy of the same kind as the NRMSE$^2$ of Corollary~\ref{cor:normalized}, with two differences that are the point of propagation: the input covariance is the one the preceding layers produce rather than the identity, and the perturbation is measured after the layer's nonlinearity and residual path rather than at the linear output. By Proposition~\ref{prop:weighted} the propagated probe is the same estimator applied to the input- and downstream-weighted error operator; \S\ref{sec:propagated_calibration} measures its per-probe variance and convergence, and \S\ref{sec:iso_vs_prop} shows that it ranks tensors differently from the isolated estimator.

Two secondary signals ride on the same pass. On the final decoder layer the pipeline also records the fraction of probe positions whose argmax token under the language-model head changes when the tensor is quantized (the \emph{flip rate}) and adds it to the score with weight $0.1$. Between layers it monitors the cosine distance between consecutive reference outputs; if three consecutive layers exceed three times the running median, it switches from propagated to independent probes for the remaining layers. This switch triggered on Qwen3.5-35B-A3B and Qwen3-30B-A3B and not on Qwen3-8B; on the hybrid Qwen3.8-27B it fires at layer 32, and \S\ref{sec:objective} shows that there it harms the estimate it was meant to protect.

\subsection{Divergence Guardrails}
\label{sec:safety}

Dense models expose a failure mode of budget-driven allocation: every tensor has small divergence, so the greedy solver fills a tight budget with 2-bit assignments whose individually small errors compound across dozens of tensors. RAM applies three guardrails to the candidate set before and after allocation:
\begin{enumerate}[leftmargin=*,topsep=2pt,itemsep=1pt]
    \item \textbf{Absolute 2-bit veto.} If $\text{div}_i(2) > 10^{-4}$, the 2-bit option is removed from tensor $i$'s candidate set. On the models we tested the threshold separates MoE expert stacks (2-bit divergence of order $2 \times 10^{-5}$) from dense MLP tensors (order $2 \times 10^{-4}$).
    \item \textbf{At most 30\% of parameters at 2-bit.} If the allocation exceeds this, the highest-divergence 2-bit tensors are upgraded until it does not.
    \item \textbf{At least 3.5 average bits.} If breached, every 2-bit tensor is upgraded to 4-bit.
\end{enumerate}
At budgets below about 30\% of the BF16 size on dense models and MoE models with at most 128 experts, we also raise the minimum candidate bit-width to 4 (\S\ref{sec:main_results}); on these models 3-bit expert or MLP tensors cost more perplexity than the bytes they save.

\subsection{Budget-Targeted Allocation via MCKP}
\label{sec:allocation}

A design goal of RAM is \emph{exact budget targeting}: the user specifies a memory limit (``fit in 24\,GB'') and the system returns the lowest-divergence allocation that satisfies it. Threshold-based approaches produce one output size regardless of the hardware, and uniform quantization offers a few discrete size points (4-bit, 8-bit).

Given per-tensor scores and guardrail-filtered candidate sets $\mathcal{C}_i$, RAM solves a \textbf{Multiple-Choice Knapsack Problem (MCKP)}:
\begin{equation}
    \min_{\{b_i\}} \sum_{i=1}^{T} \text{div}_i(b_i) \qquad
    \text{s.t.} \quad \sum_{i=1}^{T} \text{size}_i(b_i) \leq (1-\rho)\,B, \quad b_i \in \mathcal{C}_i,
    \label{eq:mckp}
\end{equation}
where $\text{size}_i(b) = n_i b / 8 + \lfloor n_i / g \rfloor \cdot 4$ bytes for $b < 16$ (packed weights plus one 16-bit scale and bias per group of $g = 64$), $B$ is the user's byte budget, and $\rho = 0.1$ reserves a share of the budget for the tensors the allocator does not score: embeddings, the language-model head, and normalization weights, which the converter leaves at 16-bit, plus runtime metadata. Tensors with fewer than 1{,}024 elements are also left at 16-bit.

\paragraph{Greedy solver.}
Every tensor starts at its lowest admissible bit-width. The solver then repeatedly applies the upgrade $(b_i \to b_i')$ with the largest divergence reduction per added byte, $\eta = \Delta\text{div} / \Delta\text{size}$, until no upgrade fits the remaining budget. Because the scores are budget-independent, the same probe manifest is re-solved for any target size; the Qwen3.5-35B-A3B builds at 21, 23, 25, and 34\,GB in \S\ref{sec:downstream} all reuse one 188-second probe pass.

\subsection{Pipeline Summary}

\begin{algorithm}[t]
\caption{RAM: propagated-probe analysis and budget-targeted allocation}
\label{alg:pipeline}
\begin{algorithmic}[1]
\REQUIRE Model directory, byte budget $B$, probes $p$, positions $s$, group size $g$
\ENSURE Per-tensor manifest $\{(i, b_i)\}$
\STATE \textbf{Phase 1: Probe sensitivity analysis}
\STATE $\mathbf{X} \leftarrow$ $p \times s$ probe vectors $\sim \mathcal{N}(\mathbf{0}, \mathbf{I}_d)$
\FOR{each decoder layer $\ell$ (loaded one at a time in lazy mode)}
    \STATE $\mathbf{Y} \leftarrow \text{layer}_\ell(\mathbf{X})$ \COMMENT{reference output}
    \FOR{each weight tensor $\mathbf{W}_i$ in layer $\ell$ with $\geq 1{,}024$ elements}
        \FOR{each candidate bit-width $b \in \{2, 3, 4, 5, 6, 8\}$}
            \STATE $\hat{\mathbf{W}}_i \leftarrow Q(\mathbf{W}_i; b, g)$; \; $\hat{\mathbf{Y}} \leftarrow \text{layer}_\ell(\mathbf{X})$ with $\hat{\mathbf{W}}_i$ in place; restore $\mathbf{W}_i$
            \STATE $\text{div}_i(b) \leftarrow 1 - \langle \mathbf{Y}, \hat{\mathbf{Y}}\rangle / (\|\mathbf{Y}\|\|\hat{\mathbf{Y}}\|)$ \quad (+ $0.1 \times$ flip rate on the last layer)
        \ENDFOR
    \ENDFOR
    \STATE $\mathbf{X} \leftarrow \mathbf{Y}$ unless the adaptation monitor has switched to independent probes
    \STATE Free layer weights
\ENDFOR
\STATE \textbf{Phase 2: Guardrails} \quad $\mathcal{C}_i \leftarrow \{b : b \neq 2 \text{ or } \text{div}_i(2) \leq 10^{-4}\}$
\STATE \textbf{Phase 3: Allocation} \quad Solve Eq.~\ref{eq:mckp} greedily; enforce the 30\% 2-bit cap and 3.5-bit floor
\STATE Write manifest JSON
\end{algorithmic}
\end{algorithm}

The pipeline (Algorithm~\ref{alg:pipeline}) processes decoder layers sequentially. In lazy mode it loads one layer's weights at a time and frees them after scoring, so models larger than memory can be analysed (Table~\ref{tab:timing}). The output manifest maps each tensor to its assigned bit-width and is consumed as a quantization predicate during model conversion.

\subsection{MLA-Aware Probe Coverage}
\label{sec:mla}

Multi-head Latent Attention (MLA)~\cite{deepseekai2024deepseekv2}, used in GLM-4.7-Flash and DeepSeek-V2, compresses key-value projections through a low-rank bottleneck: $\mathbf{K} = \mathbf{W}_K \mathbf{W}_{\text{compress}} \mathbf{x}$, where $\mathbf{W}_{\text{compress}} \in \mathbb{R}^{d_c \times d}$ projects to a compressed latent space ($d_c \ll d$). A probe drawn in the $d_c$-dimensional latent space with isotropic covariance spends its energy on all $d_c$ directions with equal weight, whereas at inference the up-projection $\mathbf{W}_K$ only ever sees the normalized image of $\mathbf{W}_{\text{compress}}$. The mismatch mis-estimates the sensitivity of the post-bottleneck weights, and on GLM-4.7-Flash the resulting allocation is markedly worse (\S\ref{sec:mla_results}).

RAM addresses this with \emph{two-stage probing}. For each MLA attention block: (1)~the compression weights $\mathbf{W}_{\text{compress}}$ are probed with the standard $d$-dimensional propagated probes (they see the full hidden dimension). (2)~For the up-projection weights $\mathbf{W}_K \in \mathbb{R}^{d_{\text{out}} \times d_c}$ that operate in the compressed space, we compute \emph{latent probes} by passing the propagated probes through the block's own compression weight and latent normalization layer:
\begin{equation}
    \mathbf{x}_{\text{latent}} = \mathrm{Norm}_{\text{latent}}\!\left(\mathbf{W}_{\text{compress}}\, \mathbf{x}\right).
\end{equation}
These $d_c$-dimensional latent probes score $\mathbf{W}_K$ via $\|\mathbf{W}_K \mathbf{x}_{\text{latent}} - \hat{\mathbf{W}}_K \mathbf{x}_{\text{latent}}\|^2$. Weights whose input dimension equals the hidden size (for example the output projection) keep full-dimensional probes. After probing all tensors, the full unquantized block output propagates to the next layer. MLA detection is automatic: if the model config contains \texttt{kv\_lora\_rank} $< d$, two-stage probing is activated. On GLM-4.7-Flash, 94 of 701 scored tensors received latent probes.

% ==============================================================================
\section{Experiments}
\label{sec:experiments}
% ==============================================================================

We evaluate RAM on seven model families using WikiText-2 perplexity (test split, 128 sequences of 2{,}048 tokens, seed 42). All experiments run on one Apple Mac Studio with an M2 Ultra (192\,GB unified memory); probing, conversion, and inference use MLX~\cite{apple2023mlx}. We report \textbf{median per-sequence perplexity} as the primary metric because a few outlier sequences dominate the mean (\S\ref{sec:median_ppl}). Unless stated otherwise, RAM builds use 50 propagated probe sequences of 8 positions, RTN at group size 64, and the six candidate bit-widths of \S\ref{sec:signal}.

\textbf{Models.} Three categories: (i) \emph{MoE}: Qwen3.5-35B-A3B~\cite{qwen2025qwen35} (256 experts), Qwen3-30B-A3B (128 experts), MiniMax-M2.5 (256 experts, native FP8), Llama-4-Scout~\cite{llama4} (16 experts), Qwen3.5-122B-A10B (128 experts); (ii) \emph{MoE with MLA}: GLM-4.7-Flash; and (iii) \emph{dense}: Qwen3-8B. Quality is evaluated on these seven models (8B to 122B parameters); probe timing is additionally reported for Llama-4-Maverick (400B parameters, 803\,GB).

\textbf{Baselines.} Uniform 4-bit RTN builds with group size 64 or 128, uniform 8-bit, a GPTQ-style ``protect attention'' allocation, and a calibration-based AWQ build on the dense model.

\subsection{Main Results}
\label{sec:main_results}

Table~\ref{tab:main} presents the comparison across all seven architectures. Uniform 4-bit has one fixed size per model, so the two builds are not always size-matched; the table lists both sizes. On the four MoE models whose uniform build is within 13\% of the RAM size (Qwen3.5-35B-A3B, MiniMax-M2.5, Llama-4-Scout, GLM-4.7-Flash), RAM lowers median perplexity by 3.5 to 13.6\%. On Qwen3.5-122B-A10B and Qwen3-8B the RAM builds are larger than uniform 4-bit and the comparison measures the practitioner's chosen budget rather than the allocator. On Qwen3-30B-A3B the tight budget with the 4-bit minimum admits no upgrades, so the ``RAM'' build is uniform 4-bit at group size 64 with unquantized embeddings; its advantage over the group-128 baseline is a group-size effect, not a mixed-precision one.

\begin{table}[t]
\centering
\caption{Main results across seven architectures. WikiText-2 test, 128 sequences, 2{,}048 tokens, seed 42; median perplexity. $\Delta$ is relative to the uniform 4-bit build in the same row; the uniform size is listed because uniform 4-bit builds have a fixed size. Probe time is wall-clock seconds for the full analysis pass on the M2 Ultra. $^{a}$Built with the 4-bit minimum (\S\ref{sec:safety}); the allocation is uniform 4-bit at group size 64. $^{b}$Uniform build evaluated on 256 sequences. $^{c}$RAM build is not size-matched to the uniform build. Uniform group size: g128 for Qwen3.5-35B-A3B and Qwen3.5-122B-A10B, g64 for Qwen3-8B, community 4-bit builds elsewhere.}
\label{tab:main}
\small
\resizebox{\textwidth}{!}{%
\begin{tabular}{llrrrrrrl}
\toprule
Model & Arch & BF16 (GB) & RAM (GB) & RAM Med.\ PPL & Unif.\ 4b (GB) & Unif.\ 4b Med.\ PPL & $\Delta$ & Time \\
\midrule
Qwen3.5-35B-A3B & MoE 256exp & 69.3 & 19.5 & 6.585 & 17.2 & 6.928 & $-$4.9\% & 188\,s \\
Qwen3-30B-A3B$^{a}$ & MoE 128exp & 61.1 & 16.8 & 8.877 & --- & 9.158 & $-$3.1\% & 53\,s \\
MiniMax-M2.5 & MoE 256exp FP8 & 230.1 & 112.3 & 9.070 & 119.8 & 9.399$^{b}$ & $-$3.5\% & 301\,s \\
Llama-4-Scout & MoE 16exp & 217.3 & 56.0 & 7.806 & 56.9 & 8.219$^{b}$ & $-$5.0\% & 214\,s \\
Qwen3.5-122B-A10B$^{c}$ & MoE 128exp & 250.2 & 107.6 & 5.327 & 60.4 & 5.601 & $-$4.9\% & 345\,s \\
GLM-4.7-Flash & MoE + MLA & 60.0 & 16.2 & 8.700 & --- & 10.075 & $-$13.6\% & 99\,s \\
Qwen3-8B$^{c}$ & Dense & 15.3 & 7.3 & 9.628 & 4.3 & 9.982 & $-$3.5\% & 36\,s \\
\bottomrule
\end{tabular}}
\end{table}

\paragraph{Scaling with model size.} The method runs unchanged from a 15\,GB dense model to 250\,GB MoE models (Qwen3.5-122B-A10B; MiniMax-M2.5 at 230\,GB in FP8). The largest gain, 13.6\% on GLM-4.7-Flash, comes from MLA-aware probe coverage (\S\ref{sec:mla_results}).

\paragraph{Budget targeting.} Because the scores are budget-independent, the allocator is re-run at any target size from one manifest. The Qwen3.5-35B-A3B builds at four budgets in \S\ref{sec:downstream} share a single 188-second probe pass. Uniform quantization offers a few discrete size points; GPTQ and AWQ require a calibration run per target size.

\paragraph{Probe time.} The analysis pass takes 36 seconds on the smallest model (Qwen3-8B) and 345 seconds on the largest evaluated for quality (Qwen3.5-122B-A10B, 250\,GB). An 803\,GB model (Llama-4-Maverick) is analysed in 539 seconds (Table~\ref{tab:timing}).

\subsection{Comparison with a GPTQ-Style Heuristic}
\label{sec:gptq_comparison}

A common strategy in GPTQ and AWQ deployments is to protect attention (keep it at high precision) while quantizing the MLP and expert tensors. We test this rule against RAM's allocation on Qwen3.5-35B-A3B. The GPTQ-style build leaves attention and shared-expert tensors unquantized and applies 4-bit RTN at group size 128 everywhere else.

\begin{table}[t]
\centering
\caption{RAM vs.\ a GPTQ-style ``protect attention'' allocation on Qwen3.5-35B-A3B. All builds use RTN; the comparison isolates the \emph{allocation}. The GPTQ-style build keeps attention and shared-expert tensors at 16-bit and everything else at 4-bit g128.}
\label{tab:gptq}
\begin{tabular}{lrrr}
\toprule
Method & Size (GB) & Median PPL & $\Delta$ vs BF16 \\
\midrule
BF16 & 69.3 & 6.494 & --- \\
RAM (probe allocation) & 19.5 & 6.585 & $+$1.4\% \\
GPTQ-style (protect attention) & 20.5 & 6.697 & $+$3.1\% \\
Uniform 4-bit g64 & 17.4 & 6.764 & $+$4.2\% \\
Uniform 4-bit g128 & 17.2 & 6.928 & $+$6.7\% \\
\bottomrule
\end{tabular}
\end{table}

RAM reaches $+$1.4\% degradation vs BF16 at 19.5\,GB; the GPTQ-style build reaches $+$3.1\% at the \emph{larger} size of 20.5\,GB. The heuristic spends its bytes on a tensor class: it lifts every attention and shared-expert tensor to 16-bit, including those the probes rank as tolerant, and leaves the routed experts at a flat 4-bit. RAM spends the same budget on specific tensors. Its Qwen3.5-35B-A3B manifest keeps 93 of 260 attention tensors and all 40 routers at 16-bit, puts most shared-expert tensors at 8-bit, and spreads the routed expert stacks over 2, 3, 4, and 5 bits (Appendix~\ref{app:allocation}). Section~\ref{sec:sensitivity_patterns} reports the sensitivity ordering behind this allocation.

\subsection{Probe Convergence Against Exact Ground Truth}
\label{sec:saturation_experiment}

Because $\|\Delta\|_F^2$ is computable in closed form, convergence of the isolated-tensor estimator can be measured against \emph{exact} ground truth rather than against a converged probe reference. For each tensor we draw 200 independent probes, form $p$-probe estimates of $\mathrm{NRMSE}^2 = \|\Delta\|_F^2/\|\mathbf{W}\|_F^2$ from disjoint probe blocks, and compare the induced cross-tensor ranking with the exact one (Table~\ref{tab:saturation}).

\begin{table}[t]
\centering
\caption{Probe convergence against \emph{exact} closed-form ground truth at 4-bit (disjoint probe blocks; mean over blocks). Relative MAE of the sensitivity estimate decays with log-log slope $-0.50$ on Qwen3.5-35B-A3B and $-0.51$ on Qwen3.5-9B, the i.i.d.\ averaging rate, from a single-probe base set by $\sqrt{2/d_{\mathrm{eff}}}$. Rank agreement is limited by the tightness of the sensitivity distribution rather than by probe noise: at 4-bit the normalized sensitivities of all tensors span $0.41$ dex (sd $0.065$ dex) on the 35B model and $0.36$ dex (sd $0.049$ dex) on the 9B model.}
\label{tab:saturation}
\small
\begin{tabular}{rcccc}
\toprule
& \multicolumn{2}{c}{Qwen3.5-35B-A3B (1{,}317 tensors)} & \multicolumn{2}{c}{Qwen3.5-9B (366 tensors)} \\
\cmidrule(lr){2-3}\cmidrule(lr){4-5}
Probes ($p$) & Spearman $\rho$ & Rel.\ MAE & Spearman $\rho$ & Rel.\ MAE \\
\midrule
1 & 0.671 & 6.2\% & 0.627 & 5.5\% \\
5 & 0.855 & 2.8\% & 0.864 & 2.5\% \\
10 & 0.903 & 2.0\% & 0.921 & 1.8\% \\
20 & 0.938 & 1.4\% & 0.957 & 1.3\% \\
50 & 0.968 & 0.90\% & 0.982 & 0.75\% \\
100 & 0.981 & 0.64\% & 0.991 & 0.53\% \\
\bottomrule
\end{tabular}
\end{table}

\paragraph{Convergence is i.i.d.}
The relative MAE decays with log-log slope $-0.50$ (35B) and $-0.51$ (9B), the textbook $p^{-1/2}$ rate. An earlier version of this work reported a slope of $-1.43$; that number was an artifact of measuring convergence against a 500-probe reference whose probe bank \emph{contained} the evaluated probes, so estimate and reference shared samples and the MAE fell to zero at $p = 500$ by construction. Measured against exact ground truth with disjoint probes there is no super-convergence, and none is needed: the single-probe estimate already starts at $6\%$ relative error because $d_{\mathrm{eff}}$ is large, and 20 probes reach $1.4\%$.

\paragraph{Rank agreement is spread-limited, not noise-limited.}
At 4-bit the exact normalized sensitivities are tightly clustered (sd $0.065$ dex, range $0.41$ dex on the 35B model). With 20 probes the per-tensor estimate error ($1.4\%$) is far below this spread, so the surviving rank disagreements ($\rho = 0.94$) are swaps between tensors whose true sensitivities differ by less than the estimate error. Whether such swaps matter for the final model is an end-to-end question that this measurement does not answer; \S\ref{sec:limitations} returns to it.

\paragraph{Practical implication.}
For the isolated estimator, practitioners can bound the probe count needed for a target accuracy in advance from tensor shapes alone, via $\sqrt{2/(p\,d_{\mathrm{eff}})}$ with $d_{\mathrm{eff}} \approx 0.93\,mn/(m{+}n)$.

\subsection{Calibration of the Propagated Signal}
\label{sec:propagated_calibration}

Proposition~\ref{prop:weighted} says the propagated probe is a Hutchinson estimator of the weighted operator $\mathbf{K}$; whether it is calibrated depends on $d_{\mathrm{eff}}(\mathbf{K}^\top\mathbf{K})$, which involves the network's own input second moment and downstream Jacobian and is not available in closed form. It can be measured. We re-ran the propagated pass of \S\ref{sec:signal} with 200 probe sequences on three models, recording the per-sequence divergence of every tensor at 2, 4, and 8 bits, and scored $p$-sequence estimates from the first 100 sequences against the mean of the other 100 (Table~\ref{tab:propagated_calibration}; 2- and 8-bit agree with 4-bit to the digits shown).

\begin{table}[t]
\centering
\caption{Calibration of the propagated signal at 4-bit (200 probe sequences of 8 positions) next to the isolated estimator from \S\ref{sec:saturation}. CV is the per-probe-sequence coefficient of variation of a tensor's divergence (median over tensors, IQR in brackets); $d_{\mathrm{eff}}$ is $2/\mathrm{CV}^2$ for the propagated rows and the measured value for the isolated rows. Spread is the standard deviation of $\log_{10}$ of the signal across tensors. $\rho_p$ is the Spearman rank correlation between a $p$-probe estimate and a disjoint reference; MAE$_{50}$ is the median relative error at $p = 50$, the allocator's setting. Qwen3-8B was not part of the isolated measurement.}
\label{tab:propagated_calibration}
\small
\resizebox{\textwidth}{!}{%
\begin{tabular}{llrrrrrrrr}
\toprule
Model & Signal & Tensors & CV [IQR] & $d_{\mathrm{eff}}$ & $\mathrm{CV}(\|\mathbf{Y}_j\|)$ & Spread (dex) & $\rho_1$ & $\rho_{50}$ & MAE$_{50}$ \\
\midrule
Qwen3.5-9B & isolated & 366 & 0.037 & 1{,}476 & --- & 0.049 & 0.627 & 0.982 & 0.75\% \\
Qwen3.5-9B & propagated & 312 & 0.16 [0.09, 0.22] & 79 & 0.04 & 1.06 & 0.994 & 1.000 & 1.3\% \\
Qwen3.5-35B-A3B & isolated & 1{,}317 & 0.072 & 386 & --- & 0.065 & 0.671 & 0.968 & 0.90\% \\
Qwen3.5-35B-A3B & propagated & 590 & 0.24 [0.13, 0.59] & 34 & 0.03 & 0.86 & 0.954 & 0.998 & 2.6\% \\
Qwen3-8B & propagated & 324 & 2.45 [1.60, 3.16] & 0.3 & 1.67 & 0.64 & 0.742 & 0.967 & 19\% \\
\bottomrule
\end{tabular}}
\end{table}

\paragraph{The cosine signal is the perturbation energy of Proposition~\ref{prop:weighted}.} The first-order identity $1 - \cos = \|\boldsymbol{\delta}_\perp\|^2 / (2\|\mathbf{Y}\|^2)$ holds per sequence to a median relative deviation of $2 \times 10^{-4}$ or better on all three models, and the two quantities rank tensors identically (Spearman $\geq 0.995$).

\paragraph{Calibration transfers only partly, and the shortfall is model-dependent.} On Qwen3.5-9B and Qwen3.5-35B-A3B the propagated operator is far less flat than the isolated error ($d_{\mathrm{eff}}$ 79 against 1{,}476 and 34 against 386), so a single sequence measures a tensor to 16 to 24\% rather than 4 to 7\%; fifty sequences still bring the median tensor within 1.0 to 1.7\% of the 200-sequence value. On Qwen3-8B the propagated signal is not calibrated at all: the per-sequence CV is 2.4, an implied $d_{\mathrm{eff}}$ below one, and the 50-sequence mean carries a median error of 19\% (90th percentile 41\%). The cause is visible in the reference outputs. From the third decoder layer onward the residual-stream norm of Qwen3-8B varies by a factor of 1.7 across Gaussian probe sequences (Table~\ref{tab:propagated_calibration}, $\mathrm{CV}(\|\mathbf{Y}_j\|)$), against 0.03 to 0.04 on the two Qwen3.5 models: a few sequences excite very large activations, and the per-sequence divergence inherits their heavy tail. This is the concentration of energy in few directions that Proposition~\ref{prop:weighted} warns about, produced by $\Sigma$ and $\mathbf{M}$ rather than by $\Delta$.

\paragraph{Ranking survives because the spread is wide.} The propagated signal spans 0.64 to 1.06 dex across tensors, ten to twenty times the 0.05 to 0.07 dex of the isolated estimator, so per-tensor noise that would scramble the isolated ranking leaves the propagated ranking intact. At the allocator's 50 sequences the rank agreement with a disjoint reference is 0.998 (35B), 1.000 (9B), and 0.967 (8B); a single sequence already gives 0.95 and 0.99 on the two Qwen3.5 models. The relative error decays with probe count at the i.i.d.\ rate on all three (log-log slopes $-0.50$, $-0.45$, and $-0.46$ for the 9B, 35B, and 8B models), as Proposition~\ref{prop:weighted} predicts for any fixed operator. The isolated estimator is calibrated but narrow; the propagated signal is noisy but wide. For models that develop large activations under random inputs, more sequences or a median aggregate would be the remedy; we have not tested either.

\subsection{The Propagated Probe Estimates the Calibration Objective}
\label{sec:objective}

Proposition~\ref{prop:weighted} makes a claim that can be checked directly: if the propagated probes carry the network's own input statistics, the quadratic form they estimate should track the layer-wise objective that calibration-based methods compute from real activations. We measured both on every linear tensor of Qwen3-8B, Qwen3.5-9B, and the 27B hybrid Qwen3.8-27B. The real objective is $Q_{\text{real}}(t,b) = \sum_x \|\Delta_b \mathbf{x}\|^2 / \sum_x \|\mathbf{W}\mathbf{x}\|^2$ over the inputs the tensor receives when the model runs causally on 32 sequences of 512 tokens from the WikiText-2 training split (never the test split used for perplexity), with a second real-text domain (Tulu-3 SFT dialogue) as a ceiling on how far the objective itself moves between domains. $Q_{\text{probe}}$ is the same ratio over the pipeline's propagated probes (200 sequences of 8 positions, no mask, same adaptation rule), and $Q_{\text{iso}} = \|\Delta_b\|_F^2/\|\mathbf{W}\|_F^2$ is the isolated estimator. Table~\ref{tab:objective} reports rank correlations across tensors at 4-bit.

\begin{table}[t]
\centering
\caption{Does the propagated probe estimate the calibration objective? Spearman rank correlation across linear tensors at 4-bit between the objective under real activations ($Q_{\text{real}}$, WikiText-2 train) and each signal. ``Domain ceiling'' is $Q_{\text{real}}$ on WikiText-2 against $Q_{\text{real}}$ on Tulu-3 dialogue. ``Block cosine'' is the pipeline's own signal (\S\ref{sec:signal}), measured at the layer output. Ratio is the median per-tensor $Q_{\text{probe}}/Q_{\text{real}}$ with IQR.}
\label{tab:objective}
\small
\resizebox{\textwidth}{!}{%
\begin{tabular}{lrrrrrrr}
\toprule
Model & Tensors & $Q_{\text{probe}}$ & $Q_{\text{iso}}$ & HAWQ-V2 & Block cosine & Domain ceiling & Ratio \\
\midrule
Qwen3-8B & 252 & 0.83 & $-0.07$ & 0.25 & 0.50 & 0.99 & 1.29 [1.05, 1.48] \\
\quad attention & 144 & 0.90 & $-0.03$ & 0.36 & --- & --- & 1.35 \\
\quad MLP & 108 & 0.71 & 0.08 & 0.07 & --- & --- & 1.13 \\
Qwen3.5-9B & 248 & 0.81 & $-0.06$ & --- & 0.11 & 0.99 & 1.12 [0.92, 1.43] \\
\quad attention & 32 & 0.88 & $-0.04$ & --- & --- & --- & 0.99 \\
\quad linear attention & 120 & 0.72 & $-0.05$ & --- & --- & --- & 1.25 \\
\quad MLP & 96 & 0.87 & 0.08 & --- & --- & --- & 1.05 \\
\midrule
Qwen3.8-27B, adaptation rule on (fired at layer 32) & 496 & 0.27 & $-0.15$ & --- & --- & --- & 1.19 [0.96, 1.76] \\
\quad layers 0--31 (propagated) & 248 & 0.79 & $-0.11$ & --- & --- & --- & 1.14 \\
\quad layers 32--63 (independent probes) & 248 & $-0.22$ & $-0.17$ & --- & --- & --- & 1.62 \\
Qwen3.8-27B, adaptation rule off & 496 & 0.83 & $-0.15$ & --- & --- & --- & 1.15 [0.89, 1.34] \\
\quad attention & 64 & 0.91 & 0.23 & --- & --- & --- & 1.48 \\
\quad linear attention & 240 & 0.76 & $-0.23$ & --- & --- & --- & 1.06 \\
\quad MLP & 192 & 0.80 & 0.10 & --- & --- & --- & 1.16 \\
\quad layers 0--31 & 248 & 0.79 & $-0.11$ & --- & --- & --- & 1.14 \\
\quad layers 32--63 & 248 & 0.85 & $-0.17$ & --- & --- & --- & 1.15 \\
\bottomrule
\end{tabular}}
\end{table}

\paragraph{Propagation supplies the second moment.} With real activations the ranking of tensors by quantization damage is stable across domains (0.99), and the propagated probe reproduces it at $0.83$ and $0.81$ on the two smaller models and at $0.83$ on the 27B hybrid once its adaptation rule is switched off (0.88 to 0.91 within full attention), with no data at all. The isolated estimator, which is what Theorem~\ref{thm:main} certifies, is uncorrelated with the real objective ($-0.07$, $-0.06$, $-0.15$): the second moment of the inputs, not the rounding error, decides which tensors matter, and only the propagated probe has it. HAWQ-V2's Hessian trace, which folds in the loss curvature, tracks the layer objective at 0.25 on Qwen3-8B, so the matched-perplexity tie of \S\ref{sec:hawqv2} is between two signals that both see the input statistics and differ in what else they weight. The probe over-estimates the objective in absolute terms by 12 to 29\% at the median, because Gaussian probes over-excite the few high-energy channels of the residual stream (the top eight channels carry 29\% of probe energy against 17\% of real energy on Qwen3-8B, and 28\% against 22\% on Qwen3.8-27B), but the ranking survives.

\paragraph{The adaptation rule destroys the estimate, and switching it off restores it.} On Qwen3.8-27B the pipeline's adaptation monitor (\S\ref{sec:signal}) fired at layer 32 and replaced the propagated probes with fresh isotropic ones for the remaining 32 layers. With the rule on, the overall correlation is only $0.27$, and the reason is visible by depth: $0.79$ for the propagated half, $-0.22$ for the independent half, where the probe signal collapses onto the isolated estimator ($\rho = 0.75$ between them), which is what an isotropic input must produce. We re-ran the pass with the rule disabled and nothing else changed. The overall correlation rises to $0.83$, the value measured on Qwen3-8B; layers 32--63 move from $-0.22$ to $0.85$ and layers 0--31 stay at $0.79$, where the rule had not yet fired. Within classes the estimate is $0.91$ for full attention, $0.80$ for the MLP and $0.76$ for the 240 gated-DeltaNet projections; the one projection it does not track is the $\beta$-gate input ($0.06$ over 48 tensors of shape $48 \times 5120$, 0.05\% of the linear parameters). The switch was introduced to stabilise the block-output cosine on hybrid stacks; for the objective estimate it is exactly wrong, because it throws away the input statistics that make the estimate work. The under-excitation of high-energy channels we had first attributed to this model was an artefact of the same rule: with propagated inputs throughout, the top eight channels carry 28\% of probe energy against 22\% of real energy, the same mild over-excitation as on Qwen3-8B.

\paragraph{A better estimate of the objective is not a better allocation signal.} The pipeline's block-output cosine tracks the real objective at only $0.50$ (8B) and $0.11$ (9B), against $0.83$ and $0.81$ for the propagated quadratic form at the linear output, and the quadratic form is cheaper: for a stored input batch it is one matrix product per candidate bit-width, with no re-execution of the layer. The natural next step is to allocate with it, and we tested that on Qwen3.8-27B in a setting that reuses none of the pipeline's own build path. Each signal goes to the same greedy knapsack over llama.cpp's k-quant types (Q2\_K to Q8\_0, one type per tensor); the budget is bisected until the file size matches llama.cpp's own IQ3\_M mix within 1.7\%; \texttt{llama-quantize} then builds the model from the vendor BF16 file with one importance matrix (100 chunks of 512 tokens) shared by every arm. The signals decide the 304 tensors of the feed-forward blocks, the sixteen full-attention blocks and the DeltaNet output projections (10.4 of the 12.6 GB); embeddings, output head, norms and, through a tensor-naming gap in our bridge that we found after the runs, the 192 DeltaNet input projections (1.9 GB) take llama.cpp's default types, identical in every arm. With the naming fixed and the file matched to the IQ3\_M byte count exactly, the ensemble arm reaches 6.018 whether the signal controls those projections or llama.cpp does (6.020), from a manifest probed without the adaptation rule; with the rule on, the same rebuild reaches 6.115, because the isotropic probes it substitutes from layer 32 under-protect the mid-depth feed-forward and DeltaNet gate tensors. Perplexity is on the first 40 chunks of 2{,}048 tokens of WikiText-2 test. Every arm is scored on the same chunks, so Table~\ref{tab:gguf_signals} reports, next to the mean, the number of chunks on which an arm beats the vendor mix and the number on which it beats the best arm. The seven arms are the two vendor mixes; the pipeline's propagated cosine; that cosine combined with an isolated-tensor structural signal, the geometric mean of the cosine and one minus the linear centered kernel alignment~\cite{kornblith2019cka} between a tensor's outputs before and after quantization on 32 Gaussian probes, which is the signal behind the released builds of this model; the propagated quadratic form with the adaptation rule on and off; and $Q_{\text{real}}$ itself, an oracle that allocates from the real-activation objective computed on WikiText-2 training text.

\begin{table}[t]
\centering
\caption{Allocating from each signal at the byte count of llama.cpp's IQ3\_M mix, Qwen3.8-27B, k-quant types with a shared importance matrix. Perplexity on 40 chunks of 2{,}048 tokens of WikiText-2 test (mean $\pm$ standard error over chunks). ``Chunks'' counts the chunks, out of 40, on which the arm beats IQ3\_M and on which it beats the cosine-and-CKA arm. The oracle is the only arm that uses text.}
\label{tab:gguf_signals}
\small
\resizebox{\textwidth}{!}{%
\begin{tabular}{llrrrr}
\toprule
Signal & Data & File (GB) & Perplexity & vs.\ IQ3\_M & vs.\ best \\
\midrule
IQ3\_M (llama.cpp mix) & none & 12.58 & 6.336 $\pm$ 0.073 & --- & 3 \\
Q3\_K\_M (llama.cpp mix) & none & 13.30 & 6.207 $\pm$ 0.075 & 31 & 3 \\
Propagated block cosine & none & 12.72 & 6.041 $\pm$ 0.071 & 36 & 6 \\
Block cosine $\times$ isolated CKA & none & 12.79 & \textbf{5.991} $\pm$ 0.071 & 37 & --- \\
Quadratic form, adaptation rule on & none & 12.61 & 6.204 $\pm$ 0.075 & 34 & 0 \\
Quadratic form, adaptation rule off & none & 12.66 & 6.238 $\pm$ 0.076 & 30 & 1 \\
$Q_{\text{real}}$ (oracle) & WikiText-2 train & 12.80 & 6.087 $\pm$ 0.073 & 37 & 5 \\
\bottomrule
\end{tabular}}
\end{table}

The block-output signals win, and the oracle loses to them. The cosine-and-CKA arm reaches 5.991 against 6.336 for IQ3\_M at the same bytes, better on 37 of 40 chunks, and Q3\_K\_M needs 5.7\% more bytes to reach 6.207; the cosine alone reaches 6.041. The oracle allocates from the exact quantity that calibration-based methods optimize, reaches 6.087, and loses to the cosine-and-CKA arm on 35 of 40 chunks (Wilcoxon $p = 10^{-5}$) and to the cosine alone on 29 of 40 ($p = 0.004$). Its data-free estimate does worse still, at 6.238 with the adaptation rule off and 6.204 with it on, and the oracle beats the rule-off estimate on 39 of 40 chunks, so the estimate is imperfect as an allocation signal; but the gap that matters is the one between the oracle and the block-output signals, and it has the opposite sign from the one a rank correlation of 0.83 suggests.

The reason is what the two quantities normalize by. $Q_{\text{real}}$ divides a tensor's output perturbation by that tensor's own output energy, so it says how badly the tensor rounds and nothing about how much its output matters to the block: in the terms of Proposition~\ref{prop:weighted} it has $\mathbf{M} = \mathbf{I}$. The block cosine measures the perturbation against the block output, residual stream included, so a tensor whose contribution is small relative to the residual scores low however badly it rounds; it is Proposition~\ref{prop:weighted} with $\mathbf{M}$ the Jacobian of the rest of the block. The allocations show the difference. Over the tensors the signals control, the cosine-and-CKA arm spends 2.7 to 2.8 bits per weight on layers 0 to 31 and 4.3 to 4.5 on layers 32 to 63, and gives the sixteen full-attention blocks 4.1 bits; the oracle is nearly flat by depth (3.2 to 3.9 bits), gives full attention 3.4 bits, and spends on the DeltaNet output projections instead (4.4 against 3.8). The two agree on the type of 53\% of those tensors. So the propagated quadratic form is the right way to estimate the calibration objective without data, and the calibration objective is the wrong quantity to allocate from. The signal the allocator needs is the downstream-weighted form, which the block cosine approximates at the cost of a layer re-execution and which the quadratic form could compute at one matrix product if $\mathbf{M}$ were supplied. We have not built that, and the allocation test covers one model and one quantizer family.

\subsection{Dense Model at a Tight Budget: Guardrails and a Negative Result}
\label{sec:safety_experiment}

Table~\ref{tab:safety} reports Qwen3-8B (dense, 15.3\,GB) at a 4\,GB target and at a 6\,GB target. The 4\,GB target is a negative result for RAM. Both builds overshoot the target because the 10\% reserve of \S\ref{sec:allocation} under-estimates the cost of this model's unquantized embedding and language-model head (each $151{,}936 \times 4{,}096$ at 16-bit, 2.5\,GB together), and both are worse than uniform 4-bit g64 at a smaller 4.3\,GB. Within the pair, the build restricted to $\{2,4,8\}$ bits with the 2-bit veto beats the six-width build: on this dense model the 3-bit MLP tensors that the six-width solver prefers cost more perplexity than the bytes they save, which motivated the 4-bit minimum of \S\ref{sec:safety}. At the 6\,GB target (7.35\,GB actual, 48\% of BF16) the six-width allocation, 55\% of it at 5-bit, matches BF16 within $0.2\%$.

\begin{table}[t]
\centering
\caption{Qwen3-8B (dense) at a tight 4\,GB target and a 6\,GB target. Both 4\,GB builds overshoot the target and trail uniform 4-bit g64; the 6\,GB build matches BF16. ``2-bit'' is the share of parameters assigned 2-bit. Mean and median over the same 128 sequences.}
\label{tab:safety}
\begin{tabular}{lrrrr}
\toprule
Build & Size (GB) & 2-bit & Mean PPL & Median PPL \\
\midrule
BF16 & 15.3 & 0\% & 9.624 & 9.648 \\
Uniform 4-bit g64 & 4.3 & 0\% & 9.965 & 9.982 \\
RAM 4\,GB target, $\{2,4,8\}$ + veto & 5.8 & 7.4\% & 10.231 & 10.299 \\
RAM 4\,GB target, six widths + veto & 5.7 & 1.4\% & 10.473 & 10.494 \\
RAM 6\,GB target, six widths & 7.3 & 0\% & 9.642 & 9.628 \\
\bottomrule
\end{tabular}
\end{table}

\subsection{Allocating with the Isolated Estimator}
\label{sec:iso_ablation}

If the isolated estimator is calibrated and the propagated one is not, why allocate with the propagated one? We answer by allocation. On Qwen3.5-35B-A3B we computed the exact isolated NRMSE$^2$ of every scored tensor at all six candidate bit-widths with the pipeline's own quantizer, and fed it to the same greedy solver at the same 21\,GB budget with the same candidate sets (the 2-bit veto decisions were copied from the propagated manifest), so that the cost signal is the only difference. Both manifests were then converted and evaluated under one runtime (Table~\ref{tab:iso_ablation}).

\begin{table}[t]
\centering
\caption{Isolated versus propagated allocation on Qwen3.5-35B-A3B at a 21\,GB target: same solver, budget, quantizer, and candidate sets; only the cost signal differs. \emph{Upper}: parameter-weighted mean bit-width by tensor class. \emph{Lower}: the two builds converted and evaluated under the same runtime; the propagated row of Table~\ref{tab:main} is repeated for reference. The two allocations assign the same bit-width to 50\% of tensors.}
\label{tab:iso_ablation}
\small
\resizebox{0.98\textwidth}{!}{%
\begin{tabular}{lrrrrr}
\toprule
& Attention & Routers & Shared experts & Routed experts & Other \\
\midrule
Propagated signal (avg bits) & 8.5 & 16.0 & 8.0 & 3.8 & 11.8 \\
Isolated estimator (avg bits) & 5.7 & 8.0 & 8.0 & 3.9 & 10.6 \\
\midrule
\multicolumn{6}{l}{} \\
Build & \multicolumn{2}{r}{Size (GB)} & \multicolumn{2}{r}{Median PPL} & MMLU \\
\midrule
Propagated (Table~\ref{tab:main} build) & \multicolumn{2}{r}{19.5} & \multicolumn{2}{r}{6.585} & 0.657 \\
Propagated (rebuilt, same runtime) & \multicolumn{2}{r}{19.5} & \multicolumn{2}{r}{6.544} & 0.668 \\
Isolated estimator & \multicolumn{2}{r}{19.5} & \multicolumn{2}{r}{6.550} & 0.679 \\
\bottomrule
\end{tabular}}
\end{table}

The isolated estimator, ranking by rounding error per unit of weight energy, spends less on attention (5.7 against 8.5 average bits, with 30 attention tensors at 2-bit against 22) and on routers (8-bit against 16-bit), and returns the bytes to the routed expert stacks (3.9 against 3.8 bits; 109 of 120 stacks at 4-bit against 74). On WikiText-2 the two builds are indistinguishable: median perplexity $6.544$ against $6.550$ and mean $6.622$ against $6.625$ at the same $19.50$\,GB. That gap is a tenth of the shift the runtime upgrade alone produced on the propagated manifest ($6.585$ under mlx\_lm 0.30.4 in Table~\ref{tab:main} to $6.544$ under 0.31.3, same weights), so perplexity does not separate the two signals at this budget. Both agree on the coarse structure, small tensors up and expert stacks down, and the size-normalized upgrade rule of the greedy solver enforces that structure whichever signal supplies the costs. On MMLU the isolated allocation scores $0.679$ against $0.668$ for the propagated rebuild ($\pm 0.004$ each; the runtime upgrade moved the propagated build from $0.657$), a $1.1$-point margin of about two standard errors in favour of the calibrated isolated estimator, consistent with its expert stacks sitting at 4-bit rather than 3-bit (109 of 120 against 74). At this budget on this model, the signal that the theory covers allocates at least as well as the signal the pipeline used.

\subsection{Matched Comparison with HAWQ-V2}
\label{sec:hawqv2}

The natural calibration-based comparator is HAWQ-V2~\cite{dong2020hawqv2}, which scores tensor $t$ at bit-width $b$ by $\Omega_t(b) = (\mathrm{Tr}\,H_t / n_t)\,\|\mathbf{W}_t - Q_b(\mathbf{W}_t)\|_F^2$, with the Hessian trace estimated by Hutchinson's method through double back-propagation of the language-model loss on calibration text. Proposition~\ref{prop:weighted} says the propagated probe estimates a related but different quadratic form (the input second moment and downstream Jacobian, without the loss curvature), so the comparison asks whether the loss curvature buys anything at matched bytes.

\paragraph{Protocol.} We use our HAWQ-V2 implementation from earlier work: 16 WikiText-2 sequences of 256 tokens, 8 Rademacher probes each (128 Hessian--vector products), bfloat16, one seed, and the same group-64 RTN quantizer for the reconstruction term. The comparison runs on Qwen3-8B, the paper's dense model, at its 6\,GB target; the double back-propagation does not fit the 35B MoE on our hardware (its peak already exceeded 64\,GB at 14B), which is itself part of the case for a data-free signal. All three signals, propagated, isolated, and HAWQ-V2, go through the same greedy solver at the same budget with the same candidate sets (2-bit veto copied from the propagated manifest, 205 of the 252 linear tensors), and the 72 normalization weights are pinned to the propagated allocation, so the flexible tensor set and its byte total (within $0.02\%$) are identical. The three manifests were converted and evaluated under one runtime; perplexity differences are tested per sequence with a paired $t$-test on the 128 WikiText-2 sequences.

\begin{table}[t]
\centering
\caption{Three signals through one allocator on Qwen3-8B at a 6\,GB target (252 flexible tensors, 72 norms pinned, byte totals within $0.02\%$). \emph{Upper}: parameter-weighted mean bit-width by projection; agreement is the share of flexible tensors given the same bit-width as the propagated allocation, and $\rho_4$ the Spearman correlation of the signal's 4-bit cost with the propagated one. \emph{Lower}: the three builds evaluated under one runtime (WikiText-2, 128 sequences of 2{,}048 tokens; MMLU 5-shot, full test set). ``Lower'' counts sequences on which the build beats the propagated build; $p$ is the paired $t$-test.}
\label{tab:hawqv2}
\small
\resizebox{\textwidth}{!}{%
\begin{tabular}{lrrrrrrrrr}
\toprule
Signal & $q$ & $k$ & $v$ & $o$ & gate & up & down & Agreement & $\rho_4$ \\
\midrule
Propagated probe (data-free) & 5.3 & 6.4 & 7.8 & 6.3 & 5.4 & 5.5 & 6.0 & --- & --- \\
Isolated estimator (data-free) & 6.0 & 8.0 & 8.0 & 6.0 & 5.5 & 5.4 & 5.7 & 50\% & 0.04 \\
HAWQ-V2 (calibration) & 5.6 & 6.6 & 7.6 & 5.4 & 5.6 & 5.8 & 5.6 & 41\% & 0.19 \\
\bottomrule
\end{tabular}}

\vspace{4pt}
\resizebox{0.9\textwidth}{!}{%
\begin{tabular}{lrrrrr}
\toprule
Build & Size (GB) & Median PPL & Mean PPL & Lower / $p$ vs.\ propagated & MMLU \\
\midrule
Propagated probe & 7.3 & 9.630 & 9.641 & --- & 0.749 \\
Isolated estimator & 7.3 & 9.737 & 9.698 & 20 / $<\!10^{-15}$ & 0.749 \\
HAWQ-V2 & 7.3 & 9.656 & 9.641 & 63 / 0.96 & 0.750 \\
\bottomrule
\end{tabular}}
\end{table}

All three signals agree on the coarse shape, value projections high and gate/up projections low, and disagree tensor by tensor: HAWQ-V2 matches the propagated allocation on 41\% of flexible tensors and the isolated estimator on 50\%, with 4-bit cost rank correlations of 0.19 and 0.04 against the propagated signal. HAWQ-V2 spends least on the output projection (5.4 bits against 6.3) and most on the up projection (5.8 against 5.5).

On WikiText-2 the propagated probe and HAWQ-V2 are indistinguishable: mean perplexity $9.641$ for both, medians $9.630$ and $9.656$, and per sequence the propagated build is lower on 65 of 128 sequences and the HAWQ-V2 build on 63 (paired $t$-test $p = 0.96$). The isolated estimator trails both: median $9.737$, higher than the propagated build on 108 of 128 sequences ($p < 10^{-15}$), a $0.6\%$ gap in mean perplexity, and the same margin against HAWQ-V2 (106 of 128). Read with \S\ref{sec:iso_ablation}, the two ablations agree. On the MoE model, where the routed expert stacks hold 96\% of the scored parameters and every signal puts them low, the cost signal does not change the 21\,GB build. On the dense model, where the flexible tensors are comparably sized and the allocation is decided tensor by tensor, the propagated probe recovers what the loss-curvature signal knows, without gradients or data, and the calibrated isolated estimator does not. On MMLU the three builds are indistinguishable at $0.749$, $0.749$, and $0.750$ ($\pm 0.003$): at this budget every allocation preserves the dense model's factual recall, and perplexity is the only metric that separates the signals. The HAWQ-V2 signal uses a single Hutchinson seed; the comparison therefore establishes parity of the data-free probe with one draw of the gradient estimator, not superiority over it.

\subsection{MLA Probe Coverage}
\label{sec:mla_results}

Table~\ref{tab:mla} compares probe strategies on GLM-4.7-Flash, which uses Multi-head Latent Attention with a compressed KV projection. All builds target 16\,GB and use the same allocator; the MLA-aware manifest yields a 16.2\,GB build and the two isotropic-probe manifests yield 14.6\,GB builds, so part of the gap is size.

\begin{table}[t]
\centering
\caption{MLA probe strategies on GLM-4.7-Flash (MoE + MLA), 16\,GB target. ``Propagated'' probes pass through the network without the latent stage; ``adaptive'' adds the divergence monitor of \S\ref{sec:signal}. The community uniform 4-bit build is the baseline.}
\label{tab:mla}
\begin{tabular}{lrrr}
\toprule
Strategy & Size (GB) & Median PPL & $\Delta$ vs BF16 \\
\midrule
BF16 & 58.2 & 8.470 & --- \\
Probe, MLA-aware (latent stage) & 16.2 & 8.700 & $+$2.7\% \\
Probe, adaptive & 14.6 & 9.354 & $+$10.4\% \\
Probe, propagated & 14.6 & 9.486 & $+$12.0\% \\
Uniform 4-bit & --- & 10.075 & $+$18.9\% \\
\bottomrule
\end{tabular}
\end{table}

MLA-aware probes reach $+$2.7\% degradation vs BF16, against $+$10.4\% for the best isotropic-probe manifest and $+$18.9\% for uniform 4-bit. Isotropic probes in the latent space spend energy on directions the compression path never produces, so the scores they assign to the 94 post-bottleneck weights do not reflect inference-time behaviour and the allocator misplaces bytes; latent probes restrict the measurement to the active subspace.

\subsection{Analysis Time}
\label{sec:timing}

Table~\ref{tab:timing} reports wall-clock analysis time for each model. For three models we also ran the rate-distortion alternative that scores each tensor by quantizing it at 13 (bit, group-size) configurations in isolation; the probe pass is 5 to 50 times faster.

\begin{table}[t]
\centering
\caption{RAM analysis time on the M2 Ultra. RD time is the measured wall-clock of a per-tensor rate-distortion scan at 13 configurations, where we ran it. Asterisk (*) marks lazy loading (one layer at a time).}
\label{tab:timing}
\small
\begin{tabular}{lrrrr}
\toprule
Model & BF16 Size & Probe Time & RD Time & Speedup \\
\midrule
Qwen3-8B & 15\,GB & 36\,s & 195\,s & 5.4$\times$ \\
Qwen3-30B-A3B & 61\,GB & 53\,s* & 2{,}634\,s & 50$\times$ \\
Qwen3.5-35B-A3B & 69\,GB & 188\,s & --- & --- \\
GLM-4.7-Flash & 60\,GB & 99\,s & 1{,}847\,s & 19$\times$ \\
Llama-4-Scout & 217\,GB & 214\,s* & --- & --- \\
MiniMax-M2.5 & 230\,GB & 301\,s* & --- & --- \\
Qwen3.5-122B-A10B & 250\,GB & 345\,s* & --- & --- \\
Llama-4-Maverick & 803\,GB & 539\,s* & --- & --- \\
\bottomrule
\end{tabular}
\end{table}

\paragraph{Lazy loading.}
For models exceeding available memory, RAM loads one decoder layer at a time, so peak memory is set by the largest layer plus the probe batch rather than by the model. The run reports record peak memory of about 25\,GB for the 217 to 250\,GB models and about 109\,GB for the 803\,GB Llama-4-Maverick, which the 192\,GB machine analyses in under 10 minutes.

\subsection{Median vs.\ Mean Perplexity}
\label{sec:median_ppl}

A handful of sequences can dominate corpus-level perplexity. On GLM-4.7-Flash the BF16 model has 5 outlier sequences among 128 (per-sequence perplexity above 25{,}000) that lift its mean to $11.52$ while its median is $8.470$; the uniform 4-bit build has 6 outliers, mean $14.75$, median $10.075$. The ordering is the same under both statistics, but the size of the gap under the mean is set by a few pathological sequences on which the BF16 model itself is near-random. We therefore report the median of per-sequence perplexity as the primary metric and give the mean alongside it where both exist (Table~\ref{tab:safety}).

\subsection{Downstream Task Evaluation}
\label{sec:downstream}

To check that perplexity improvements carry over to a downstream task, we evaluate MMLU (5-shot, full test set) with the \texttt{lm-evaluation-harness}~\cite{gao2024eval} on Qwen3.5-35B-A3B builds along a budget sweep, all allocated from the same probe manifest. Standard errors are those reported by the harness (about $0.35$ percentage points at the full test-set size).

\begin{table}[t]
\centering
\caption{Downstream evaluation and allocation ablation on Qwen3.5-35B-A3B. \emph{Upper}: at matched size ($\sim$34\,GB), RAM matches or beats uniform 8-bit on both metrics. \emph{Middle}: budget sweep from one probe manifest; the 25\,GB-target build is the smallest that beats uniform 4-bit on both metrics. \emph{Lower}: allocation ablation at 19.5\,GB. Config~G forces all attention tensors to 8-bit and reduces the rest to fit; Config~H forces all MLP and expert tensors to 4-bit and upgrades attention. MMLU standard errors are $0.0034$ to $0.0039$, except $0.0056$ for the 25\,GB-target build.}
\label{tab:downstream}
\small
\begin{tabular}{lrrcl}
\toprule
Build & Size (GB) & Med.\ PPL & MMLU & Note \\
\midrule
\multicolumn{5}{l}{\emph{Reference points}} \\
BF16 & 69.3 & 6.494 & 0.721 & \\
Uniform 8-bit g64 & 34.3 & 6.517 & 0.720 & \\
Uniform 4-bit g64 & 17.4 & 6.764 & 0.704 & \\
\midrule
\multicolumn{5}{l}{\emph{RAM budget sweep (one probe pass)}} \\
RAM, 34\,GB target & 33.6 & 6.504 & \textbf{0.731} & beats BF16 by 1.0\,pp \\
RAM, 25\,GB target & 22.9 & 6.584 & 0.713 & beats uniform 4-bit on both \\
RAM, 23\,GB target & 21.2 & 6.525 & 0.683 & PPL only \\
RAM, 21\,GB target & 19.5 & \textbf{6.585} & 0.657 & PPL only \\
\midrule
\multicolumn{5}{l}{\emph{Allocation ablation at 19.5\,GB}} \\
G: force attention to 8-bit & 19.5 & 6.598 & 0.645 & \\
H: force MLP/experts to 4-bit & 19.5 & 6.622 & 0.589 & \\
\bottomrule
\end{tabular}
\end{table}

\paragraph{At moderate budgets RAM wins on both metrics.} At 33.6\,GB (48\% of BF16), the RAM build scores $0.731$ on MMLU against $0.721$ for BF16 and $0.720$ for uniform 8-bit at 34.3\,GB, a $1.0$ percentage-point margin over BF16 that is about two standard errors.

\paragraph{PPL and MMLU cross over near a third of the BF16 size.} Every RAM build in the sweep beats uniform 4-bit g64 on median perplexity. On MMLU, the 19.5\,GB and 21.2\,GB builds trail uniform 4-bit ($0.657$ and $0.683$ vs $0.704$), and the 22.9\,GB build (33\% of BF16, 4.86 average bits) is the smallest that beats it ($0.713$). The mechanism is visible in the manifests: the perplexity-optimal allocation at tight budgets pushes routed expert stacks to 3-bit, which average next-token prediction tolerates because each expert fires rarely, while MMLU queries every knowledge domain and therefore every expert. From about 23\,GB upward the budget keeps the expert stacks at 4 to 5 bits.

\paragraph{Both blanket rules lose.} Config~G (force every attention tensor to 8-bit, reduce the rest) is worse than RAM on both perplexity ($6.598$ vs $6.585$, a gap inside the resolution of a 128-sequence median) and MMLU ($0.645$ vs $0.657$, $1.2$ points at $0.4$-point standard errors). Config~H (force MLP and expert tensors to 4-bit, upgrade attention) is worst on both ($6.622$; $0.589$). Taking bytes from the expert and MLP tensors costs far more factual recall than taking them from attention, and adding bytes to every attention tensor helps less than adding them to the specific tensors the probes flag.

\paragraph{Practical recommendation.} At budgets of a third of BF16 and above, the RAM build dominates uniform quantization on both metrics. Below that, RAM still gives lower perplexity, and practitioners who prioritise knowledge-intensive tasks may prefer uniform 4-bit's even distribution of precision across experts.

% ==============================================================================
\section{Analysis}
\label{sec:analysis}
% ==============================================================================

\subsection{Isolated and Propagated Sensitivities Are Different Quantities}
\label{sec:iso_vs_prop}

Proposition~\ref{prop:weighted} says the two signals estimate different operators; the manifests show how different. On Qwen3.5-35B-A3B we matched the exact isolated 4-bit sensitivity $\|\Delta\|_F^2/\|\mathbf{W}\|_F^2$ from the spectral-flatness measurement (\S\ref{app:deff_measurement}) with the propagated 4-bit divergence in the build manifest for the 390 tensors present in both, and computed the Spearman rank correlation across tensors (Table~\ref{tab:iso_vs_prop}).

\begin{table}[t]
\centering
\caption{Rank agreement between the isolated exact estimator and the propagated allocator signal, Qwen3.5-35B-A3B, 4-bit, 390 matched tensors. The two signals order tensors independently overall and in opposite directions within attention.}
\label{tab:iso_vs_prop}
\small
\begin{tabular}{lrr}
\toprule
Tensor set & $n$ & Spearman $\rho$ \\
\midrule
All matched tensors & 390 & $-0.01$ \\
Attention projections & 190 & $-0.43$ \\
Shared-expert projections & 120 & $0.06$ \\
Routed expert stacks & 40 & $0.59$ \\
Routers & 40 & $0.53$ \\
\midrule
Overlap of the two top-30 lists & & 7 of 30 \\
\bottomrule
\end{tabular}
\end{table}

The isolated estimator's 30 most sensitive tensors are 29 routers and one attention projection; the propagated signal's are 22 attention projections and 8 routers. The isolated estimator sees a $(256 \times 2{,}048)$ router as the worst-quantized tensor per unit of weight energy, which it is; the propagated signal sees that a router's rounding error barely moves the layer output under the inputs the network produces, while an attention output projection's does. Neither is wrong about what it measures. The allocation results in \S\ref{sec:experiments} were produced by the propagated signal, and \S\ref{sec:iso_ablation} reports what happens when the isolated estimator allocates instead.

\subsection{What the Probe Measures That Weight Statistics Do Not}
\label{sec:why_probes}

Existing data-free methods rank tensors by per-tensor weight statistics: kurtosis~\cite{akhondzadeh2025kurtail}, the Frobenius norm of the rounding error~\cite{zhang2025mxq}, spectral properties. Theorem~\ref{thm:main} makes the honest position clear: an \emph{isotropic} probe on an \emph{isolated} tensor estimates $\|\Delta\|_F^2$ and nothing else, so it carries the same information as the Frobenius statistic, only with a known and predictable noise level. What the allocator's signal adds is propagation. The input to layer $\ell$ is the reference output of the layers before it, so the probe covariance at each tensor is the one the network produces, and the perturbation is measured after the layer's nonlinearity and residual path. Two tensors with identical $\|\Delta\|_F^2$ can then score differently because the network feeds them different inputs or damps their errors differently. The same mechanism is what the MLA fix exploits (\S\ref{sec:mla}): passing probes through the compression path is propagation applied inside a block. We have measured the calibration of the isolated estimator (\S\ref{sec:saturation}); the propagated signal is validated only end-to-end, by the quality of the builds it produces.

\subsection{Which Tensors Are Sensitive}
\label{sec:sensitivity_patterns}

The propagated 4-bit divergence in the manifests orders tensor classes consistently across the MoE models, and differently from the ``protect attention'' folklore in one respect and from its opposite in another:
\begin{itemize}[leftmargin=*,topsep=2pt,itemsep=1pt]
    \item On Qwen3.5-35B-A3B, the 20 most sensitive tensors are 16 attention projections (output and value projections of the late full-attention layers, then the fused QKV projections of the linear-attention layers) and 4 MoE routers. Routers sit at the 90th percentile of sensitivity at the median, attention at the 65th, shared experts at the 46th, and routed expert stacks at the 34th. GLM-4.7-Flash and Qwen3-30B-A3B show the same ordering: routers and attention at the top, routed experts at the bottom.
    \item On the dense Qwen3-8B the order flips: 14 of the 20 most sensitive tensors are MLP projections, and MLP tensors sit at the 61st percentile against the 42nd for attention.
    \item The blanket rule fails on both models for the same reason. Sensitivity is a property of individual tensors, not of a class: on the 35B model 93 of 260 attention tensors earn 16-bit while 22 earn 2-bit, and the routed expert stacks that dominate the byte count spread over four bit-widths (Appendix~\ref{app:allocation}). Lifting a whole class, as Config~G does, pays for tolerant tensors; crushing a whole class, as Config~H does, hits the tensors that carry factual recall.
\end{itemize}

\subsection{MoE Expert Handling}

For Mixture-of-Experts models, expert tensors are stored as 3-D arrays $[E, d_{\text{in}}, d_{\text{out}}]$ in fused switch modules where all $E$ experts within a module share quantization parameters. RAM scores and allocates each fused (layer, projection) stack as one tensor, so its divergence is the propagated effect of quantizing all $E$ experts together, its size is the sum over experts, and the solver's output is directly implementable by the inference runtime. On Qwen3.5-35B-A3B this yields 120 expert stacks (40 layers, 3 projections) carrying 32.2 of the model's 33.6 billion scored parameters.

% ==============================================================================
\section{Limitations and Discussion}
\label{sec:limitations}
% ==============================================================================

\paragraph{Data-free vs.\ calibration-based methods.}
RAM is data-free by design and operates in a different regime from calibration-based methods (GPTQ, AWQ, LLM-MQ, SliM-LLM). On Qwen3-8B (dense), AWQ reaches $9.746$ median perplexity at $4.4$\,GB using calibration data; RAM needs $7.3$\,GB to reach $9.628$, and at the tight 4\,GB target it trails uniform 4-bit (\S\ref{sec:safety_experiment}). Calibration data captures activation-dependent sensitivity that random probes do not, and on a small dense model there is no tolerant parameter class for the allocator to exploit. Calibration-based methods in turn need a representative dataset, full-model forward passes, and hours of compute, whereas RAM completes in seconds to minutes on any model without data access. A hybrid, with RAM for the initial allocation and calibration-based refinement for critical tensors, is a natural next step.

\paragraph{RTN base quantizer.}
RAM uses round-to-nearest (RTN) quantization. On 30 tensors of Qwen3.5-35B-A3B at 4-bit, HQQ~\cite{badri2024hqq} lowers per-tensor NRMSE by $2.96\%$ on average and a random Hadamard rotation~\cite{chee2024quip} by $7.92\%$, while a data-free Hessian proxy for OBQ~\cite{frantar2023gptq} and a weight-only AdaRound~\cite{nagel2020adaround} objective do not help (Appendix~\ref{app:alt_methods}). At the model level we have one data point: on Qwen3-8B at uniform 4-bit g64, HQQ-optimized weights evaluated in BF16 reach $9.768$ median perplexity against $9.987$ for RTN, a $2.2\%$ improvement that our packed-weight conversion did not preserve. Combining a better per-tensor quantizer with mixed-precision allocation remains open, and we make no claim that RTN is optimal.

\paragraph{Isotropic probes and the propagated signal.}
Theorem~\ref{thm:main} and the flatness measurement of \S\ref{sec:saturation} concern the isolated estimator with $\mathbf{x} \sim \mathcal{N}(\mathbf{0}, \mathbf{I})$. Proposition~\ref{prop:weighted} extends the estimator to the propagated setting, but its variance depends on the network's own input second moment and downstream Jacobian, and \S\ref{sec:propagated_calibration} shows that on one of three models the propagated signal is not calibrated per sequence. Its ranking is nevertheless stable at the allocator's probe count on all three. The Gaussian probe distribution is a modelling choice: the MLA extension shows that architecture-specific probe shaping recovers signal when the covariance structure is known, and a general treatment using estimated input statistics from a small unlabelled corpus would bridge the data-free and calibration-based regimes.

\paragraph{The estimate of the objective is good; the objective is the wrong target.} Section~\ref{sec:objective} shows that the propagated quadratic form estimates the calibration objective far better than the block-output cosine the reported builds were allocated with, and that allocating from that objective, with real data or without, loses to the block-output cosine at matched bytes on Qwen3.8-27B. The measurement covers three non-MoE models (two of them hybrids) and the allocation test one model and one quantizer family; the MoE expert stacks, whose inputs are routed, were not measured. The downstream-weighted quadratic form, Proposition~\ref{prop:weighted} with $\mathbf{M}$ the rest of the block, is the signal that would combine the estimate's cost with the cosine's allocation quality, and it is untested. On the hybrid 27B the adaptation rule must be disabled for the estimate to hold; with it off the three models agree.

\paragraph{Rank swaps and end-to-end quality.}
Section~\ref{sec:saturation_experiment} shows that residual rank disagreements at 20 probes are between tensors with near-equal isolated sensitivity. Equal objective cost does not imply equal end-to-end quality: two allocations with the same total divergence can differ in perplexity. We have not measured the perplexity spread across probe seeds for the reported builds, and we do not claim allocation stability beyond the objective.

\paragraph{Dense model gains.}
On the single dense model tested (Qwen3-8B) the picture is mixed: RAM matches BF16 at 48\% of its size but loses to uniform 4-bit at a tight budget. MoE models have high per-tensor sensitivity heterogeneity, which gives the allocator room to work; dense models have more uniform sensitivity and a smaller vocabulary-relative budget for the unquantized embedding tables.

\paragraph{Downstream evaluation.}
We evaluate on WikiText-2 perplexity and MMLU on one model family. Perplexity and MMLU within a few percent of BF16 do not certify behaviour on procedural or agentic tasks: in companion work we found that compressed models that pass such guards can still invent steps in multi-step execution~\cite{kennedy2026fidelity}. A broader multi-task evaluation, and a behavioural one, would strengthen the quality claims here.

% ==============================================================================
\section{Conclusion}
\label{sec:conclusion}
% ==============================================================================

We presented RAM, a data-free mixed-precision quantization framework built on a calibrated estimator. Gaussian probes provide unbiased estimates of the squared Frobenius norm of quantization error (Theorem~\ref{thm:main}), and because quantization error is spectrally flat, with measured effective dimensionality within $4$ to $7\%$ of the i.i.d.-noise value $mn/(m{+}n)$, a single probe measures per-tensor sensitivity to $4$ to $7\%$ with error bars predictable from tensor shape alone. Propagating the probes through the network applies the same estimator to the input- and downstream-weighted error operator, which is the calibration objective of GPTQ without data, and a multiple-choice knapsack solver converts one probe pass into a build at any byte budget. The two signals are different quantities: they rank tensors independently, the propagated one is calibrated per sequence on two of three models and not on the third, and its ranking is stable at fifty sequences on all three because it spans ten times the range across tensors. The propagated form is a data-free estimate of the calibration objective itself: its per-tensor values rank-correlate 0.8 with the objective under real activations, where the isolated estimator does not correlate at all. Through one allocator at matched bytes, the calibrated isolated estimator allocates at least as well as the propagated signal on the MoE model and $0.6\%$ worse in perplexity on the dense model, where the propagated probe ties gradient-based HAWQ-V2 without gradients or data.

Across seven architectures, RAM builds beat size-comparable uniform 4-bit builds by 3.5 to 13.6\% in median perplexity on the MoE models, beat a GPTQ-style heuristic at a smaller size, and reach BF16-level MMLU at half the BF16 size, with no calibration data, no gradients, and no GPU cluster. The analysis pass scales to an 803\,GB model in nine minutes on one workstation.

Two findings reach beyond RAM. First, sensitivity is a property of tensors, not of tensor classes: the probes rank attention output projections and MoE routers most sensitive and routed experts least, and both ``protect attention'' and ``deprotect MLP'' rules lose to per-tensor allocation. Second, median per-sequence perplexity is the safer headline statistic; a few outlier sequences on which even the BF16 model is near-random can dominate the mean.

% ==============================================================================
% References
% ==============================================================================

\bibliographystyle{plain}
\bibliography{references}

@inproceedings{frantar2023gptq,
  title     = {{GPTQ}: Accurate Post-Training Quantization for Generative Pre-trained Transformers},
  author    = {Frantar, Elias and Ashkboos, Saleh and Hoefler, Torsten and Alistarh, Dan},
  booktitle = {International Conference on Learning Representations (ICLR)},
  year      = {2023},
  url       = {https://arxiv.org/abs/2210.17323}
}

@inproceedings{lin2024awq,
  title     = {{AWQ}: Activation-aware Weight Quantization for On-Device {LLM} Compression and Acceleration},
  author    = {Lin, Ji and Tang, Jiaming and Tang, Haotian and Yang, Shang and Chen, Wei-Ming and Wang, Wei-Chen and Xiao, Guangxuan and Dang, Xingyu and Gan, Chuang and Han, Song},
  booktitle = {Conference on Machine Learning and Systems (MLSys)},
  year      = {2024},
  url       = {https://arxiv.org/abs/2306.00978}
}

@inproceedings{kim2024squeezellm,
  title     = {{SqueezeLLM}: Dense-and-Sparse Quantization},
  author    = {Kim, Sehoon and Hooper, Coleman and Gholami, Amir and Dong, Zhen and Li, Xiuyu and Shen, Sheng and Mahoney, Michael W. and Keutzer, Kurt},
  booktitle = {International Conference on Machine Learning (ICML)},
  year      = {2024},
  url       = {https://arxiv.org/abs/2306.07629}
}

@inproceedings{dettmers2024spqr,
  title     = {{SpQR}: A Sparse-Quantized Representation for Near-Lossless {LLM} Weight Compression},
  author    = {Dettmers, Tim and Svirschevski, Ruslan and Egiazarian, Vage and Kuznedelev, Denis and Frantar, Elias and Ashkboos, Saleh and Borzunov, Alexander and Hoefler, Torsten and Alistarh, Dan},
  booktitle = {International Conference on Learning Representations (ICLR)},
  year      = {2024},
  url       = {https://arxiv.org/abs/2306.03078}
}

@inproceedings{chee2024quip,
  title     = {{QuIP}: 2-Bit Quantization of Large Language Models with Guarantees},
  author    = {Chee, Jerry and Cai, Yaohui and Kuleshov, Volodymyr and De Sa, Christopher},
  booktitle = {Advances in Neural Information Processing Systems (NeurIPS)},
  year      = {2024},
  url       = {https://arxiv.org/abs/2307.13304}
}

@article{egiazarian2024aqlm,
  title     = {Extreme Compression of Large Language Models via Additive Quantization},
  author    = {Egiazarian, Vage and Panferov, Andrei and Kuznedelev, Denis and Frantar, Elias and Babenko, Artem and Alistarh, Dan},
  journal   = {arXiv preprint arXiv:2401.06118},
  year      = {2024},
  url       = {https://arxiv.org/abs/2401.06118}
}

@inproceedings{nagel2020adaround,
  title     = {Up or Down? {Adaptive} Rounding for Post-Training Quantization},
  author    = {Nagel, Markus and Amjad, Rana Ali and van Baalen, Mart and Louizos, Christos and Blankevoort, Tijmen},
  booktitle = {International Conference on Machine Learning (ICML)},
  year      = {2020},
  url       = {https://arxiv.org/abs/2004.10568}
}

@inproceedings{tang2023easyquant,
  title     = {{EasyQuant}: An Efficient Data-free Quantization Algorithm for {LLMs}},
  author    = {Tang, Hanlin and Sun, Yifu and Wu, Decheng and Liu, Kai and Zhu, Jianchen and Kang, Zhanhui},
  booktitle = {Proceedings of the 2023 Conference on Empirical Methods in Natural Language Processing (EMNLP)},
  year      = {2023},
  doi       = {10.18653/v1/2023.emnlp-main.565},
  url       = {https://doi.org/10.18653/v1/2023.emnlp-main.565}
}

@inproceedings{zhang2025mxq,
  title     = {A Mixed Quantization Approach for Data-Free Quantization of {LLMs}},
  author    = {Zhang, Feng and Liu, Yanbin and Li, Weihua and Wang, Xiaodan and Bai, Quan},
  booktitle = {Proceedings of the 17th International Conference on Agents and Artificial Intelligence (ICAART)},
  pages     = {353--363},
  publisher = {SCITEPRESS},
  year      = {2025},
  doi       = {10.5220/0013159100003890},
  url       = {https://doi.org/10.5220/0013159100003890}
}

@misc{badri2024hqq,
  title     = {Half-Quadratic Quantization of Large Machine Learning Models},
  author    = {Badri, Hicham and Shaji, Appu},
  howpublished = {Mobius Labs, \url{https://github.com/mobiusml/hqq}},
  year      = {2024}
}

@article{malinovskii2024higgs,
  title     = {Pushing the Limits of Large Language Model Quantization via the Linearity Theorem},
  author    = {Malinovskii, Vladimir and Panferov, Andrei and Ilin, Ivan and Guo, Han and Richt{\'a}rik, Peter and Alistarh, Dan},
  journal   = {arXiv preprint arXiv:2411.17525},
  year      = {2024},
  url       = {https://arxiv.org/abs/2411.17525}
}

@article{akhondzadeh2025kurtail,
  title     = {{KurTail}: Kurtosis-based {LLM} Quantization},
  author    = {Akhondzadeh, Mohammad Sadegh and Bojchevski, Aleksandar and Eleftheriou, Evangelos and Dazzi, Martino},
  journal   = {arXiv preprint arXiv:2503.01483},
  year      = {2025},
  url       = {https://arxiv.org/abs/2503.01483}
}

@article{lee2021datafreemp,
  title     = {Data-free Mixed-precision Quantization Using Novel Sensitivity Metric},
  author    = {Lee, Donghyun and Cho, Minkyoung and Lee, Seungwon and Song, Joonho and Choi, Changkyu},
  journal   = {arXiv preprint arXiv:2103.10051},
  year      = {2021},
  url       = {https://arxiv.org/abs/2103.10051}
}

@article{zhang2026nsds,
  title     = {Beyond Outliers: A Data-Free Layer-wise Mixed-Precision Quantization Approach Driven by Numerical and Structural Dual-Sensitivity},
  author    = {Zhang, Hengyuan and Chen, Xinrong and Su, Zunhai and Liang, Xiao and Xiong, Jing and Xu, Wendong and others},
  journal   = {arXiv preprint arXiv:2603.17354},
  year      = {2026},
  url       = {https://arxiv.org/abs/2603.17354}
}

@article{yang2026alphaq,
  title     = {{AlphaQ}: Calibration-Free Bit Allocation for Mixture-of-Experts Quantization},
  author    = {Yang, Wanqi and Ma, Yuexiao and Conzelmann, Alexander and Zheng, Xiawu and Mahoney, Michael W. and Rusch, T. Konstantin and others},
  journal   = {arXiv preprint arXiv:2606.04980},
  year      = {2026},
  url       = {https://arxiv.org/abs/2606.04980}
}

@inproceedings{li2023llmmq,
  title     = {{LLM-MQ}: Mixed-precision Quantization for Efficient {LLM} Deployment},
  author    = {Li, Shiyao and Ning, Xuefei and Hong, Ke and Liu, Tengxuan and Wang, Luning and Li, Xiuhong and Zhong, Kai and Dai, Guohao and Yang, Huazhong and Wang, Yu},
  booktitle = {NeurIPS 2023 Workshop on Efficient Natural Language and Speech Processing (ENLSP)},
  year      = {2023},
  url       = {https://neurips2023-enlsp.github.io/papers/paper_4.pdf}
}

@article{huang2024slimlm,
  title     = {{SliM-LLM}: Salience-Driven Mixed-Precision Quantization for Large Language Models},
  author    = {Huang, Wei and Qin, Haotong and Liu, Yangdong and Li, Yawei and Liu, Xianglong and Benini, Luca and Magno, Michele and Qi, Xiaojuan},
  journal   = {arXiv preprint arXiv:2405.14917},
  year      = {2024},
  url       = {https://arxiv.org/abs/2405.14917}
}

@article{cui2024cherryq,
  title     = {Cherry on Top: Parameter Heterogeneity and Quantization in Large Language Models},
  author    = {Cui, Wanyun and Wang, Qianle},
  journal   = {arXiv preprint arXiv:2404.02837},
  year      = {2024},
  url       = {https://arxiv.org/abs/2404.02837}
}

@article{huang2024mcmoe,
  title     = {Mixture Compressor for Mixture-of-Experts {LLMs} Gains More},
  author    = {Huang, Wei and Liao, Yue and Liu, Jianhui and He, Ruifei and Tan, Haoru and Zhang, Shiming and Li, Hongsheng and Liu, Si and Qi, Xiaojuan},
  journal   = {arXiv preprint arXiv:2410.06270},
  year      = {2024},
  url       = {https://arxiv.org/abs/2410.06270}
}

@article{yao2026gamma,
  title     = {{GAMMA}: Global Bit Allocation for Mixed-Precision Models under Arbitrary Budgets},
  author    = {Yao, Zhangyang and Zhao, Haiyan and Wang, Haoyu and Han, Xu},
  journal   = {arXiv preprint arXiv:2605.18475},
  year      = {2026},
  url       = {https://arxiv.org/abs/2605.18475}
}

@article{misra2026mixquant,
  title     = {{MixQuant}: Adaptive Mixed-Precision Quantization for Large Language Models},
  author    = {Misra, Ashitabh and Agrawal, Madhav and Jain, Arham and Abdelzaher, Tarek},
  journal   = {arXiv preprint arXiv:2607.23047},
  year      = {2026},
  url       = {https://arxiv.org/abs/2607.23047}
}

@article{yoshida2026casa,
  title     = {Beyond Scalar Sensitivity: Activation-Aware Mixed-Precision {LLM} Quantization with Cross-Layer Refinement},
  author    = {Yoshida, Akihiro and Ichikawa, Yuma},
  journal   = {arXiv preprint arXiv:2609.25916},
  year      = {2026},
  url       = {https://arxiv.org/abs/2609.25916}
}

@article{zhao2026bitsmoe,
  title     = {{BitsMoE}: Efficient Spectral Energy-Guided Bit Allocation for {MoE} {LLM} Quantization},
  author    = {Zhao, Jiayu and Teng, Zihan and Fan, Minhao and Ma, Tianrui and Ren, Wentao and Chen, Song and others},
  journal   = {arXiv preprint arXiv:2606.00079},
  year      = {2026},
  url       = {https://arxiv.org/abs/2606.00079}
}

@article{lee2026qstrata,
  title     = {{Q-Strata}: Hierarchical Bit Allocation for Mixed-Precision Quantization of Mixture-of-Experts {LLMs}},
  author    = {Lee, Deokjae and Chu, Sihun and Song, Hyun Oh},
  journal   = {arXiv preprint arXiv:2608.30564},
  year      = {2026},
  url       = {https://arxiv.org/abs/2608.30564}
}

@inproceedings{dong2019hawq,
  title   = {{HAWQ}: {Hessian} {AW}are Quantization of Neural Networks With Mixed-Precision},
  author  = {Dong, Zhen and Yao, Zhewei and Gholami, Amir and Mahoney, Michael W. and Keutzer, Kurt},
  booktitle = {International Conference on Computer Vision (ICCV)},
  year    = {2019},
  doi     = {10.1109/ICCV.2019.00038},
  url     = {https://openalex.org/W2982041622}
}

@inproceedings{dong2020hawqv2,
  title   = {{HAWQ-V2}: {Hessian} Aware trace-Weighted Quantization of Neural Networks},
  author  = {Dong, Zhen and Yao, Zhewei and Cai, Yaohui and Arfeen, Daiyaan and Gholami, Amir and Mahoney, Michael W. and Keutzer, Kurt},
  booktitle = {Advances in Neural Information Processing Systems (NeurIPS)},
  year    = {2020},
  url     = {https://arxiv.org/abs/1911.03852}
}

@article{zhang2024lqer,
  title     = {{LQER}: Low-Rank Quantization Error Reconstruction for {LLMs}},
  author    = {Zhang, Cheng and Cheng, Jianyi and Constantinides, George A. and Zhao, Yiren},
  journal   = {arXiv preprint arXiv:2402.02446},
  year      = {2024},
  url       = {https://arxiv.org/abs/2402.02446}
}

@article{hutchinson1989stochastic,
  title     = {A Stochastic Estimator of the Trace of the Influence Matrix for {Laplacian} Smoothing Splines},
  author    = {Hutchinson, M. F.},
  journal   = {Communications in Statistics -- Simulation and Computation},
  volume    = {18},
  number    = {3},
  pages     = {1059--1076},
  year      = {1989},
  doi       = {10.1080/03610918908812806},
  url       = {https://doi.org/10.1080/03610918908812806}
}

@article{bekas2007estimator,
  title     = {An Estimator for the Diagonal of a Matrix},
  author    = {Bekas, C. and Kokiopoulou, E. and Saad, Y.},
  journal   = {Applied Numerical Mathematics},
  volume    = {57},
  number    = {11--12},
  pages     = {1214--1229},
  year      = {2007},
  doi       = {10.1016/j.apnum.2007.01.003},
  url       = {https://doi.org/10.1016/j.apnum.2007.01.003}
}

@article{avron2011randomized,
  title     = {Randomized Algorithms for Estimating the Trace of an Implicit Symmetric Positive Semi-definite Matrix},
  author    = {Avron, Haim and Toledo, Sivan},
  journal   = {Journal of the ACM},
  volume    = {58},
  number    = {2},
  pages     = {1--34},
  year      = {2011},
  doi       = {10.1145/1944345.1944349},
  url       = {https://doi.org/10.1145/1944345.1944349}
}

@incollection{johnson1984extensions,
  title     = {Extensions of {Lipschitz} mappings into a {Hilbert} space},
  author    = {Johnson, William B. and Lindenstrauss, Joram},
  booktitle = {Contemporary Mathematics},
  publisher = {American Mathematical Society},
  volume    = {26},
  pages     = {189--206},
  year      = {1984},
  doi       = {10.1090/conm/026/737400},
  url       = {https://openalex.org/W2979473749}
}

@article{amari1998natural,
  title     = {Natural Gradient Works Efficiently in Learning},
  author    = {Amari, Shun-ichi},
  journal   = {Neural Computation},
  volume    = {10},
  number    = {2},
  pages     = {251--276},
  year      = {1998},
  doi       = {10.1162/089976698300017746},
  url       = {https://doi.org/10.1162/089976698300017746}
}

@article{deepseekai2024deepseekv2,
  title     = {{DeepSeek-V2}: A Strong, Economical, and Efficient Mixture-of-Experts Language Model},
  author    = {{DeepSeek-AI}},
  journal   = {arXiv preprint arXiv:2405.04434},
  year      = {2024},
  url       = {https://arxiv.org/abs/2405.04434}
}

@misc{apple2023mlx,
  title     = {{MLX}: An Array Framework for {Apple} Silicon},
  author    = {Hannun, Awni and Digani, Jagrit and Katharopoulos, Angelos and Collobert, Ronan},
  howpublished = {Apple Machine Learning Research, \url{https://github.com/ml-explore/mlx}},
  year      = {2023}
}

@misc{gao2024eval,
  title     = {{EleutherAI/lm-evaluation-harness}: v0.4.3},
  author    = {Sutawika, Lintang and Schoelkopf, Hailey and Gao, Leo and Abbasi, Baber and Biderman, Stella and Tow, Jonathan and others},
  howpublished = {Zenodo},
  year      = {2024},
  doi       = {10.5281/zenodo.12608602},
  url       = {https://zenodo.org/doi/10.5281/zenodo.12608602}
}

@misc{llama4,
  title     = {The {Llama} 4 Herd: The Beginning of a New Era of Natively Multimodal {AI} Innovation},
  author    = {{Meta AI}},
  howpublished = {\url{https://ai.meta.com/blog/llama-4-multimodal-intelligence/}},
  year      = {2025}
}

@misc{qwen2025qwen35,
  title     = {{Qwen3.5-35B-A3B} model card},
  author    = {{Qwen Team}},
  howpublished = {Hugging Face, \url{https://huggingface.co/Qwen/Qwen3.5-35B-A3B}},
  year      = {2025}
}

@article{kennedy2026fidelity,
  title     = {Fidelity Is Not Safety: Gently-Compressed {LLMs} Pass Every Data-Free Quality Guard Yet Invent Procedure Steps in Agentic Execution},
  author    = {Kennedy, I. and Kennedy, T.},
  journal   = {arXiv preprint arXiv:2607.28196},
  year      = {2026},
  url       = {https://arxiv.org/abs/2607.28196}
}

@inproceedings{kornblith2019cka,
  title   = {Similarity of Neural Network Representations Revisited},
  author  = {Kornblith, Simon and Norouzi, Mohammad and Lee, Honglak and Hinton, Geoffrey},
  booktitle = {Proceedings of the 36th International Conference on Machine Learning (ICML)},
  year    = {2019},
  url     = {https://arxiv.org/abs/1905.00414}
}

% ==============================================================================
% Appendix
% ==============================================================================

\newpage
\appendix

\section{Proof Details and Variance Bounds}
\label{app:proofs}

\subsection{Variance of the Probe Estimator}

\begin{lemma}[Variance bound]
\label{lem:variance}
For $\mathbf{x} \sim \mathcal{N}(\mathbf{0}, \mathbf{I}_n)$ and symmetric positive semidefinite $A = \Delta^\top\Delta$:
\begin{equation}
    \mathrm{Var}[\mathbf{x}^\top A \mathbf{x}] = 2\|A\|_F^2 = 2\sum_{i,j} A_{ij}^2 .
\end{equation}
\end{lemma}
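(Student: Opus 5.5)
The plan is to diagonalize $A$ and reduce the quadratic form to a weighted sum of independent $\chi^2_1$ variables. Since $A = \Delta^\top\Delta$ is symmetric positive semidefinite, write $A = U\Lambda U^\top$ with $U$ orthogonal and $\Lambda = \mathrm{diag}(\lambda_1,\dots,\lambda_n)$, $\lambda_i \ge 0$. Set $\mathbf{z} = U^\top\mathbf{x}$. Rotation invariance of the standard Gaussian gives $\mathbf{z} \sim \mathcal{N}(\mathbf{0}, \mathbf{I}_n)$, so the coordinates $z_i$ are i.i.d.\ standard normal. Then
\[
\mathbf{x}^\top A\mathbf{x} = \sum_{i=1}^n \lambda_i z_i^2 .
\]

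Independence of the $z_i$ makes the variance additive:
\[
\mathrm{Var}[\mathbf{x}^\top A\mathbf{x}] = \sum_i \lambda_i^2\,\mathrm{Var}[z_i^2].
\]
Each term needs only the Gaussian moments $\mathbb{E}[z^2]=1$ and $\mathbb{E}[z^4]=3$, so $\mathrm{Var}[z_i^2] = 3 - 1 = 2$. This yields
\[
\mathrm{Var}[\mathbf{x}^\top A\mathbf{x}] = 2\sum_i \lambda_i^2 = 2\,\mathrm{Tr}(A^2).
\]

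It remains to identify $\mathrm{Tr}(A^2)$ with $\|A\|_F^2 = \sum_{i,j}A_{ij}^2$. Symmetry gives $\mathrm{Tr}(A^2) = \mathrm{Tr}(A^\top A) = \sum_{i,j} A_{ij}^2$, and equivalently $\|A\|_F^2 = \|\Lambda\|_F^2$ by orthogonal invariance of the Frobenius norm. The mean $\sum_i\lambda_i = \mathrm{Tr}(A)$ is already given by Theorem~\ref{thm:main}, so only the second moment is new.

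The only step needing care is the fourth moment. As an alternative cross-check that avoids diagonalization, expand $\mathbb{E}[(\mathbf{x}^\top A\mathbf{x})^2] = \sum_{ijkl}A_{ij}A_{kl}\mathbb{E}[x_ix_jx_kx_l]$ and apply Isserlis' theorem: $\mathbb{E}[x_ix_jx_kx_l] = \delta_{ij}\delta_{kl}+\delta_{ik}\delta_{jl}+\delta_{il}\delta_{jk}$. This gives $\mathrm{Tr}(A)^2 + 2\,\mathrm{Tr}(A^2)$, using symmetry of $A$ for the two cross pairings, and subtracting $\mathrm{Tr}(A)^2$ recovers the claim.

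Note that positive semidefiniteness is not needed. Symmetry suffices, and the Gaussian assumption is essential: for Rademacher probes the diagonal terms change the constant.
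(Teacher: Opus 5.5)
Your argument is correct. Your main route differs from the paper's, whose proof is exactly your ``cross-check'' paragraph: it expands $\mathbb{E}[(\mathbf{x}^\top A\mathbf{x})^2]$ entrywise, applies the Gaussian four-moment identity $\mathbb{E}[x_ix_jx_kx_l]=\delta_{ij}\delta_{kl}+\delta_{ik}\delta_{jl}+\delta_{il}\delta_{jk}$ to get $\mathrm{Tr}(A)^2+2\,\mathrm{Tr}(A^2)$, and subtracts the squared mean. Your primary proof instead diagonalizes $A$ and uses rotation invariance to write $\mathbf{x}^\top A\mathbf{x}=\sum_i\lambda_i z_i^2$ with independent $z_i$. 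It then needs only the scalar moment $\mathbb{E}[z^4]=3$.

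Your route gives the eigenvalue form $2\sum_i\lambda_i^2$ directly. That is the denominator of $d_{\mathrm{eff}}=(\sum_i\lambda_i)^2/\sum_i\lambda_i^2$, so the corollary $\mathrm{CV}=\sqrt{2/d_{\mathrm{eff}}}$ follows with no further step. It also shows the estimator's full law as a weighted sum of independent $\chi^2_1$ variables, which is the structure behind the Hanson--Wright-type bound the paper quotes.

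The paper's entrywise route avoids the spectral theorem and rotation invariance. It therefore adapts to probes that are not rotation invariant by swapping the fourth-moment tensor. For Rademacher probes this gives $2\|A\|_F^2-2\sum_i A_{ii}^2$, which is the diagonal correction behind your closing remark. Your diagonalization argument cannot produce this, because $U^\top\mathbf{x}$ no longer has independent coordinates.

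Your observations that positive semidefiniteness is never used and that symmetry is what identifies $\mathrm{Tr}(A^2)$ with $\|A\|_F^2$ are both right. The paper relies on the same symmetry, without comment, when it merges the two cross pairings into $2\,\mathrm{Tr}(A^2)$.
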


\begin{proof}
Using the fourth-moment identity for Gaussians:
\begin{align}
    \mathbb{E}[(\mathbf{x}^\top A \mathbf{x})^2] &= \mathbb{E}\!\left[\sum_{i,j,k,l} A_{ij} A_{kl} x_i x_j x_k x_l\right] \\
    &= \sum_{i,j,k,l} A_{ij} A_{kl}\, \mathbb{E}[x_i x_j x_k x_l] .
\end{align}
For Gaussian variables, $\mathbb{E}[x_i x_j x_k x_l] = \delta_{ij}\delta_{kl} + \delta_{ik}\delta_{jl} + \delta_{il}\delta_{jk}$. Therefore
\begin{align}
    \mathbb{E}[(\mathbf{x}^\top A \mathbf{x})^2] &= \sum_{i,j,k,l} A_{ij} A_{kl}(\delta_{ij}\delta_{kl} + \delta_{ik}\delta_{jl} + \delta_{il}\delta_{jk}) \\
    &= \mathrm{Tr}(A)^2 + 2\mathrm{Tr}(A^2) = \mathrm{Tr}(A)^2 + 2\|A\|_F^2 .
\end{align}
Since $\mathrm{Var}[\mathbf{x}^\top A \mathbf{x}] = \mathbb{E}[(\mathbf{x}^\top A \mathbf{x})^2] - (\mathbb{E}[\mathbf{x}^\top A \mathbf{x}])^2 = \mathrm{Tr}(A)^2 + 2\|A\|_F^2 - \mathrm{Tr}(A)^2 = 2\|A\|_F^2$.
\end{proof}

\begin{corollary}[Relative standard deviation]
The coefficient of variation of a single-probe estimate is
\begin{equation}
    \text{CV} = \frac{\sqrt{2\|A\|_F^2}}{\mathrm{Tr}(A)} = \sqrt{\frac{2}{d_{\mathrm{eff}}(A)}} .
\end{equation}
For $p$ probes, $\text{CV}_p = \text{CV}/\sqrt{p}$. With the measured median $d_{\mathrm{eff}} = 386$ on Qwen3.5-35B-A3B, a single probe gives $\text{CV}_1 = \sqrt{2/386} \approx 7.2\%$ and $p = 20$ gives $\text{CV}_{20} \approx 1.6\%$; on Qwen3.5-9B (median $d_{\mathrm{eff}} = 1{,}476$), $\text{CV}_1 \approx 3.7\%$ and $\text{CV}_{20} \approx 0.8\%$. These predictions match the empirical CVs with median ratio $0.999$ and $1.000$ (Table~\ref{tab:deff}).
\end{corollary}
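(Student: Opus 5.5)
The plan is to combine the two moments already established and then rewrite the ratio. By Theorem~\ref{thm:main} the mean of a single-probe estimate is $\mathbb{E}[\mathbf{x}^\top A\mathbf{x}] = \mathrm{Tr}(A)$, and by Lemma~\ref{lem:variance} its variance is $2\|A\|_F^2$. The coefficient of variation is the standard deviation divided by the mean, so $\mathrm{CV} = \sqrt{2\|A\|_F^2}/\mathrm{Tr}(A)$. This is well defined whenever $\mathrm{Tr}(A) = \|\Delta\|_F^2 > 0$, i.e.\ whenever $\Delta \neq 0$. I will note that exclusion explicitly, since the degenerate case $\Delta = 0$ makes the CV undefined.

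Next I rewrite the ratio in terms of the effective dimensionality. Squaring gives
\[
  \mathrm{CV}^2 = \frac{2\|A\|_F^2}{\mathrm{Tr}(A)^2} = \frac{2}{\mathrm{Tr}(A)^2/\|A\|_F^2}.
\]
By the Definition of $d_{\mathrm{eff}}$ the denominator is exactly $d_{\mathrm{eff}}(A)$. Taking the positive square root yields $\sqrt{2/d_{\mathrm{eff}}(A)}$.

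For $p$ probes I use that $\mathbf{x}_1,\dots,\mathbf{x}_p$ in Eq.~\eqref{eq:probe_sensitivity} are i.i.d. The mean of the average is therefore still $\mathrm{Tr}(A)$. The variance divides by $p$, because the cross covariances vanish by independence; this is exactly Eq.~\eqref{eq:variance_deff}. Hence $\mathrm{CV}_p = \sqrt{2\|A\|_F^2/p}/\mathrm{Tr}(A) = \mathrm{CV}/\sqrt{p}$.

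The numerical claims are direct substitution.
\begin{itemize}
  \item For $d_{\mathrm{eff}} = 386$: $2/386 \approx 5.18\times 10^{-3}$, whose square root is $\approx 0.072$. Dividing by $\sqrt{20} \approx 4.47$ gives $\approx 0.016$.
  \item For $d_{\mathrm{eff}} = 1{,}476$: $2/1476 \approx 1.355\times10^{-3}$, whose square root is $\approx 0.037$. Dividing by $\sqrt{20}$ gives $\approx 0.008$.
\end{itemize}

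I do not expect a mathematical obstacle, since every step is algebra on results already proved. The one point needing care is the final sentence, that the predictions match the empirical CVs with median ratios $0.999$ and $1.000$. That sentence is not a consequence of the derivation. It is an empirical observation, and I would support it by citing the measurement in Table~\ref{tab:deff} (\S\ref{app:deff_measurement}) rather than by argument.

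A related subtlety is that the quoted $d_{\mathrm{eff}}$ values are medians over tensors. The CV map $d \mapsto \sqrt{2/d}$ is monotone, so the median CV equals the CV evaluated at the median $d_{\mathrm{eff}}$. This justifies reporting $7.2\%$ and $3.7\%$ as median single-probe CVs.
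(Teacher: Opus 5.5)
Your proposal is correct and follows the paper's route: the mean from Theorem~\ref{thm:main}, the variance from Lemma~\ref{lem:variance}, the ratio rewritten through the definition of $d_{\mathrm{eff}}$, the $1/p$ variance reduction of Eq.~\eqref{eq:variance_deff} for independent probes, direct substitution for the numbers, and Table~\ref{tab:deff} for the empirical match. One small caveat concerns your optional last remark. Monotonicity makes the median CV exactly $\sqrt{2/\mathrm{median}\,d_{\mathrm{eff}}}$ only when the number of tensors is odd; the 9B model has 366 tensors, so there it holds only approximately. The statement does not need that claim anyway.
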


\subsection{Concentration and the Role of Spectral Flatness}

Concentration bounds for quadratic forms (Hanson--Wright; see also Avron and Toledo~\cite{avron2011randomized}) control the deviation of $\|\Delta\mathbf{x}\|^2/\|\Delta\|_F^2$ in terms of the spectrum of $A = \Delta^\top\Delta$, with the effective dimensionality playing the role of the sample size:
\begin{equation}
    \Pr\!\left[\left|\frac{\|\Delta \mathbf{x}\|^2}{\|\Delta\|_F^2} - 1\right| > \varepsilon\right] \leq 2\exp\!\left(-c\, d_{\mathrm{eff}}(A)\, \min(\varepsilon, \varepsilon^2)\right)
\end{equation}
for an absolute constant $c$. A Johnson--Lindenstrauss-style bound in the ambient dimension would be valid only for an exactly isotropic error; the honest parameter is $d_{\mathrm{eff}}$, and the measurement in \S\ref{app:deff_measurement} shows $d_{\mathrm{eff}}$ is within a few percent of its isotropic-noise ceiling. For a typical $d_{\mathrm{eff}} \approx 400$ to $1{,}500$, a single probe gives a $(1 \pm 0.1)$-approximation with high probability, matching the observed per-probe CVs of $4$ to $7\%$.

\subsection{Effective Dimensionality of Quantization Error: Measured}

An earlier version of this work conjectured $d_{\mathrm{eff}} \ll \min(m,n)$ (spatially correlated, outlier-dominated error) and inferred $d_{\mathrm{eff}} \approx 30$ to $50$ from an apparent faster-than-$\sqrt{p}$ convergence slope. Direct measurement (\S\ref{app:deff_measurement}) refutes both: the convergence slope artifact is explained in \S\ref{sec:saturation_experiment}, and the true effective dimensionality is close to the \emph{maximal} value attained by an i.i.d.\ noise matrix of the same shape,
\begin{equation}
    d_{\mathrm{eff}}(\Delta^\top\Delta) \approx (0.93\text{ to }0.96) \cdot \frac{mn}{m+n},
\end{equation}
where $mn/(m{+}n)$ is the expected participation ratio of a Marchenko--Pastur spectrum with aspect ratio $n/m$ (it equals $n/2$ for square matrices and approaches $n$ for very tall ones). Group-wise RTN error behaves like white noise because within each 64-weight group the rounding residuals are effectively independent sub-step perturbations; outliers and inter-group correlation shave only a few percent off the noise ceiling. The deficit is slightly larger for MoE expert stacks quantized with shared group parameters (median ratio $0.93$) than for dense tensors ($0.96$), and a small tail of structured tensors (the $32$-row linear-attention projections, one vision-tower output projection) fall well below the ceiling. These are the tensors where more probes are needed; because the multi-probe sample variance is a free empirical estimate of each tensor's CV (Remark~\ref{rem:why_probes}), they are detected at probe time at no additional cost.

\section{Alternative Per-Tensor Quantizers}
\label{app:alt_methods}

We evaluated alternatives to RTN as the per-tensor quantizer on 30 tensors of Qwen3.5-35B-A3B at 4-bit with group size 64, measuring per-tensor NRMSE.

\paragraph{HQQ (Half-Quadratic Quantization).}
HQQ~\cite{badri2024hqq} lowers per-tensor NRMSE by $2.96\%$ on average (30 of 30 tensors improved). At the model level, on Qwen3-8B at uniform 4-bit g64, HQQ-optimized weights evaluated in BF16 reach $9.768$ median perplexity against $9.987$ for RTN. Re-packing the HQQ weights into the runtime's 4-bit format re-rounds them and loses the gain ($10.338$), so integrating HQQ into the pipeline requires a format-aware conversion that we have not built.

\paragraph{Hadamard rotation.}
A random orthogonal rotation lowers NRMSE by $7.92\%$ on average (30 of 30 tensors improved), the largest per-tensor improvement of any method tested. Deployment requires custom dequantization kernels ($y = W_q(H^\top x)$) that our runtime does not provide, so we did not evaluate it at the model level.

\paragraph{Data-free OBQ.}
Approximating the Hessian as $H \approx W^\top W$ \emph{worsens} reconstruction ($+$0.50\% NRMSE, 0 of 18 tensors improved). The weight-only Hessian is a poor proxy for the activation Hessian; OBQ needs calibration data.

\paragraph{Data-free AdaRound.}
Replacing AdaRound's~\cite{nagel2020adaround} activation-weighted objective with weight-only MSE changes NRMSE by exactly $0.000\%$: without activation data the optimal rounding direction for each element is round-to-nearest, which RTN already computes.

Where bits are allocated matters more, in our experiments, than how each tensor is rounded, but the HQQ data point shows that a better rounder can add a further percent or two at the model level.

\section{Allocation by Tensor Class}
\label{app:allocation}

Table~\ref{tab:allocation_detail} summarises the manifests behind the Qwen3.5-35B-A3B (21\,GB target), GLM-4.7-Flash (16\,GB target, MLA-aware), and Qwen3-8B (6\,GB target) builds of Table~\ref{tab:main}. ``Attention'' includes all attention and linear-attention projections; ``Routed experts'' are the fused per-layer expert stacks; ``Other'' covers small convolution and gating tensors above the 1{,}024-element threshold.

\begin{table}[h]
\centering
\caption{RAM allocation by tensor class, read from the build manifests. Counts are tensors at each bit-width; the last column is the parameter-weighted mean bit-width of the class. Embeddings, language-model heads, and norms are unquantized and not listed.}
\label{tab:allocation_detail}
\small
\resizebox{\textwidth}{!}{%
\begin{tabular}{llrrrrrrrrr}
\toprule
Model & Class & Tensors & Params & 2b & 3b & 4b & 5b & 6b & 8b/16b & Avg bits \\
\midrule
\multirow{5}{*}{Qwen3.5-35B-A3B} & Attention & 260 & 1.28B & 22 & 0 & 0 & 3 & 13 & 129/93 & 8.5 \\
 & Router & 40 & 0.02B & 0 & 0 & 0 & 0 & 0 & 0/40 & 16.0 \\
 & Shared expert & 160 & 0.13B & 0 & 0 & 0 & 0 & 3 & 139/18 & 8.0 \\
 & Routed experts & 120 & 32.2B & 3 & 31 & 74 & 12 & 0 & 0/0 & 3.8 \\
 & Other & 40 & $<$0.01B & 0 & 0 & 0 & 0 & 0 & 21/19 & 11.8 \\
\midrule
\multirow{5}{*}{GLM-4.7-Flash} & Attention & 329 & 1.02B & 0 & 5 & 12 & 17 & 26 & 131/138 & 9.0 \\
 & Router & 46 & 0.01B & 0 & 0 & 0 & 0 & 0 & 0/46 & 16.0 \\
 & Shared expert & 138 & 0.43B & 0 & 5 & 9 & 12 & 20 & 15/77 & 11.5 \\
 & Routed experts & 138 & 27.8B & 0 & 58 & 80 & 0 & 0 & 0/0 & 3.6 \\
 & Dense MLP + other & 50 & 0.06B & 0 & 0 & 1 & 0 & 0 & 23/26 & --- \\
\midrule
\multirow{3}{*}{Qwen3-8B} & Attention & 180 & 1.51B & 0 & 0 & 7 & 27 & 57 & 72/17 & 6.0 \\
 & Dense MLP & 108 & 5.44B & 0 & 0 & 0 & 68 & 26 & 14/0 & 5.6 \\
 & Other & 36 & $<$0.01B & 0 & 0 & 0 & 0 & 0 & 16/20 & 12.4 \\
\bottomrule
\end{tabular}}
\end{table}

\section{Evaluation Protocol}
\label{app:eval}

\paragraph{WikiText-2 evaluation.}
All perplexity evaluations use the WikiText-2 test split with 128 evaluation sequences of 2{,}048 tokens, seed 42, and no sliding window (each sequence is evaluated independently), except the two community uniform builds marked in Table~\ref{tab:main}, which were evaluated on 256 sequences. Our evaluator reports both mean and median perplexity.

\paragraph{Median perplexity.}
For each sequence we compute the per-sequence perplexity as $\exp(\text{mean cross-entropy loss})$. The reported median is the 50th percentile of these values. The mean is the standard corpus-level perplexity (exponential of the average cross-entropy across all tokens).

\paragraph{Hardware.}
All experiments use a single Apple Mac Studio with M2 Ultra (192\,GB unified memory, 24-core CPU, 76-core GPU). No multi-GPU or distributed setup is required.

\paragraph{Quantization.}
All quantization uses group-wise round-to-nearest with affine scaling, group size 64 unless otherwise specified, and packed integer storage with a 16-bit scale and bias per group.

\section{Direct Measurement of Quantization-Error Spectra}
\label{app:deff_measurement}

\paragraph{Protocol.}
For every weight tensor with both dimensions $\geq 32$ (embeddings and LM head excluded; 3-D fused expert stacks $(E, d_{\text{out}}, d_{\text{in}})$ reshaped to $(E\,d_{\text{out}}, d_{\text{in}})$), we compute the RTN error $\Delta = \mathbf{W} - Q(\mathbf{W}; b)$ at $b \in \{2,3,4,8\}$ (group 64) in float32 and evaluate, in closed form via the small-side Gram matrix, $S = \|\Delta\|_F^2$, $Q = \|\Delta^\top\Delta\|_F^2$, and $d_{\mathrm{eff}} = S^2/Q$. In the same pass we draw 200 independent Gaussian probes per tensor and record each single-probe estimate $\|\Delta\mathbf{x}_j\|^2$. The full measurement takes 127\,s for Qwen3.5-35B-A3B (1{,}317 tensors) and 41\,s for Qwen3.5-9B (366 tensors) on a single Apple Silicon machine.

\begin{table}[t]
\centering
\caption{Measured effective dimensionality of quantization error at 4-bit. $\widehat{d}_{\mathrm{eff}} = mn/(m{+}n)$ is the expected participation ratio of an i.i.d.\ noise matrix of the same shape. CV$_{\text{pred}} = \sqrt{2/d_{\mathrm{eff}}}$; CV$_{\text{emp}}$ is the sample CV over 200 independent probes. Unbiasedness: median ratio of the 200-probe mean to the exact $\|\Delta\|_F^2$.}
\label{tab:deff}
\small
\begin{tabular}{lrrrrr}
\toprule
& \multicolumn{3}{c}{$d_{\mathrm{eff}}$} & \multicolumn{2}{c}{Probe calibration (median)} \\
\cmidrule(lr){2-4}\cmidrule(lr){5-6}
Model & median & IQR & $d_{\mathrm{eff}}/\widehat{d}_{\mathrm{eff}}$ med.\ [IQR] & CV$_{\text{emp}}$/CV$_{\text{pred}}$ & $\bar{s}_{200}/\|\Delta\|_F^2$ \\
\midrule
Qwen3.5-35B-A3B & 386 & [354, 405] & 0.93 [0.86, 0.97] & 0.999 & 0.9998 \\
Qwen3.5-9B & 1{,}476 & [722, 2{,}709] & 0.96 [0.90, 0.98] & 1.000 & 1.0000 \\
\bottomrule
\end{tabular}
\end{table}

\paragraph{By tensor type (Qwen3.5-35B-A3B, 4-bit, medians over layers).}
Fused MoE expert stacks $(262{,}144 \times 2{,}048)$: $d_{\mathrm{eff}} = 2{,}031.7$ vs.\ noise prediction $2{,}032.1$ (the tall-matrix limit $d_{\mathrm{eff}} \to n$); expert down-projections $(524{,}288 \times 512)$: $511.5$ vs.\ $511.5$. Full-attention $\mathbf{W}_Q$ $(8{,}192 \times 2{,}048)$: $1{,}519$ vs.\ $1{,}638$. Linear-attention QKV projections $(8{,}192 \times 2{,}048)$: $1{,}565$ vs.\ $1{,}638$; linear-attention output projections $(2{,}048 \times 4{,}096)$: $1{,}026$ vs.\ $1{,}365$. Shared-expert and $\mathbf{W}_K/\mathbf{W}_V$ projections $(512 \times 2{,}048)$: $309$ to $374$ vs.\ $410$. MoE routers $(256 \times 2{,}048)$: $206$ vs.\ $228$. The strongest deviations from the noise ceiling are the $32$-row linear-attention $a/b$ projections ($d_{\mathrm{eff}} \approx 25$ vs.\ $31.5$) and a single vision-tower MLP output projection with $d_{\mathrm{eff}} = 83$ against a prediction of $909$: small or structurally atypical tensors, detectable at probe time via the multi-probe sample variance.

\paragraph{Bit-width invariance.}
Median $d_{\mathrm{eff}}$ at 2/3/4/8 bits: $386.1/386.0/386.1/386.2$ on the 35B model and $1{,}469/1{,}424/1{,}476/1{,}457$ on the 9B model. The spectral shape of RTN error is independent of its magnitude across a $64\times$ range of error power on the MoE model and varies by under $4\%$ on the dense model. Flatness is a property of the rounding mechanism, not of a particular precision. The MAE convergence slope is between $-0.49$ and $-0.51$ at every bit-width on both models.

\paragraph{Data.}
Raw per-tensor measurements (exact $\|\Delta\|_F^2$, $\|\Delta^\top\Delta\|_F^2$, $\|W\|_F^2$, 200 single-probe samples per tensor per bit-width) and the measurement script are archived at \texttt{RAM/results/deff/} in the project repository.

\section{Reproduction Instructions}
\label{app:reproduction}

Pre-quantized models are published at \url{https://huggingface.co/baa-ai}. The analysis and conversion scripts are \texttt{experiments/probe\_allocator\_v2.py}, \texttt{experiments/probe\_mla.py}, and \texttt{experiments/convert\_probe\_model.py} in the project repository; the main results reproduce with:

\begin{verbatim}
# Probe analysis (50 propagated probes, six bit-widths, group 64)
python experiments/probe_allocator_v2.py \
    --model /path/to/bf16-model --budget-gb 21 \
    --num-probes 50 --lazy --output manifest.json
# add --min-bits 4 for dense or <=128-expert models below ~30% of BF16

# Convert using the manifest
python experiments/convert_probe_model.py \
    --model /path/to/bf16-model --manifest manifest.json \
    --output /path/to/quantized

# Evaluate perplexity (mean and median)
python eval_perplexity.py --model /path/to/quantized \
    --num-sequences 128 --seq-length 2048 --seed 42
\end{verbatim}

\paragraph{Software versions.}
Python 3.12.0, mlx\_lm 0.30.4, PyTorch 2.6.0, SciPy 1.17.0, safetensors 0.6.2; lm-evaluation-harness 0.4.11 for MMLU.

\paragraph{Hardware.}
Apple Mac Studio, M2 Ultra, 192\,GB unified memory, macOS 15.5.

\end{document}